\title[Discrepancy, Coresets, and Sketches in Machine Learning]{Discrepancy, Coresets, and Sketches in Machine Learning}
\newtheorem{fact}[theorem]{Fact}
\newtheorem*{theorem*}{Theorem}
\newcommand{\ip}[1]{\left \langle #1 \right \rangle}
\newcommand{\R}{\mathbb{R}}
\newcommand{\E}{\mathbb{E}}
\newcommand{\eps}{\epsilon}
\newcommand{\F}{\mathcal{F}}
\newcommand{\X}{\mathcal{X}}
\newcommand{\Q}{\mathcal{Q}}
\newcommand{\floor}[1]{\left \lfloor #1 \right \rfloor}
\newcommand{\ceil}[1]{\left \lceil #1 \right \rceil}
\begin{document}

\maketitle

\begin{abstract}
This paper defines the notion of class discrepancy for families of functions.  
It shows that low discrepancy classes admit small offline and streaming coresets. 
We provide general techniques for bounding the class discrepancy of machine learning problems. 
As corollaries of the general technique we bound the discrepancy (and therefore coreset complexity) of logistic regression, sigmoid activation loss, matrix covariance, kernel density and any analytic function of the dot product or the squared distance.
Our results prove the existence of $\eps$-approximation $O(\sqrt{d}/\eps)$ sized coresets for the above problems. 
This resolves the long-standing open problem regarding the coreset complexity of Gaussian kernel density estimation.  
We provide two more related but independent results. 
First, an exponential improvement of the widely used merge-and-reduce trick which gives improved streaming sketches for any low discrepancy problem.
Second, an extremely simple deterministic algorithm for finding low discrepancy sequences (and therefore coresets) for any positive semi-definite kernel. 
This paper establishes some explicit connections between class discrepancy, coreset complexity, learnability, and streaming algorithms. 
\end{abstract}

\paragraph{Acknowledgments:} The authors sincerely thank Nikhil Bansal, Nikhil Srivastava, Jeff Phillips, Wai Ming Tai, and Camron Musco 
for generously sharing their time and ideas.
They helped us uncover the usefulness of Banaszczyk's theorem for proving Lemma~\ref{uc}, compare to other results on coresets and discrepancy
(specifically on Gaussian Kernel Density estimation), and understand the connection to graph sparsification and matrix column subset selection results. 

\section{Introduction}
The study of coresets in optimization as a whole and in machine learning specifically has a long history. 
The basic setup is as follows.
Suppose you are trying to optimize an expression over a set of items, data points, or examples.
The optimization problem is difficult. 
Its running time dependence on the input set size is square, cubic, or even exponential.
As a result, there is a strong incentive to reduce the cardinality of that set. 
The goal is, therefore, to pinpoint a small subset of data items which approximates the entire input set with respect to the optimization at hand. 
Such small sets are called coresets. 
This idea is very general and applies to geometric properties of the data \cite{agarwal2005geometric}, 
clustering \cite{DBLP:conf/compgeom/Har-PeledK05} \cite{DBLP:conf/stoc/FeldmanL11}, classification \cite{DBLP:conf/ijcai/Har-PeledRZ07}, regression \cite{DBLP:conf/nips/MunteanuSSW18} 
machine learning \cite{bachem2017practical}, density estimation \cite{DBLP:journals/corr/abs-1802-01751}, and many other problems. 

Obtaining small coresets and understanding the coreset complexity (the size of the minimal coreset) of different problems is of significant theoretical and practical importance.
While some problems obviously do not admit small coresets, others do. 
There are several results that connect the simplicity of the measure and its coreset complexity.
In this manuscript, we focus solely on sums of functions applied to the input items.
That is, for $\{x_i,\ldots,x_n\} \subset \X$ we measure $F(q) = \sum_i f(x_i,q)$ for $q \in \Q$ which is either some model parameters or a query.
For example, one could consider the sum of sigmoid activation losses $F(q) = \sum_{i=1}^{n} 1/(1+ \exp(\ip{x_i,q}))$ and $x,q\in \R^d$.
Using Chernoff's inequality and a union bound already shows that sampling $\tilde O(d/\eps^2)$ items gives an $\eps n$ approximation to this sum.\footnote{Using $\tilde O(\cdot)$ to suppress poly-logarithmic terms.} 
In general, for families of VC dimension $v$, a sample of $(v+\log(1/\delta))/\eps^2$ suffices \cite{talagrand1994sharper}.
For logistic regression and many other problems $O(d/\eps)$ samples are enough due to fast rate generalization results \cite{van2015fast}.
For Gaussian kernel density, it is known that a sample size of $O(1/\eps^2)$ suffices independently of $d$  \cite{lopez2015towards}.
These results require different analyses and seem to stem from different mathematical underpinnings. 
This paper provides a general framework for obtaining and improving on these results. 

Rademacher complexity (see for example \cite{Bartlett:2003:RGC:944919.944944}) is a standard measure of generalization.
In other words, bounding the Rademacher complexity is a good way to upper bound the sample complexity. 
A sample is an instance of a coreset which is chosen i.i.d.\ from the data (or the underlying distribution).
A carefully selected coreset can, at least potentially, be better than a uniformly sampled one. 
It can be smaller and still give the same generalization power or give better generalization with the same number of data points. 
There are papers such as \cite{langberg2010universal, tolochinsky2018coresets} and references therein that tie the coreset size to the VC dimension of the function family and the average sensitivity of the dataset. 
These relationships come up as tools for constructing coresets rather than complexity measures aimed to characterize generalization ability. 
This paper defines the analog to Rademacher complexity that aims to characterize the coreset complexity, i.e., the generalization ability of the best possible coreset of a fixed size.
 
We show that our result applies to any analytic function of the dot product. 
These include Logistic Regression $F(q) =  \sum_i \log(1+\exp(\ip{y_ix_i,q}))$, Covariance or matrix approximation 
$F(q) =  \sum_i \ip{x_i,q}^2$, sigmoid activation loss $F(q) =  \sum_i 1/(1+\exp(\ip{y_ix_i,q}))$, linear regression $F(q) =  \sum_i (\ip{y_ix_i,q} -y_i)^2$ and many others. 
For all the aforementioned  $x,q \in R^d$ and $y_i \in \{-1,1\}$.
By bounding the class discrepancy of all such functions we prove the existence of coresets of size $O(\sqrt{d}/\eps)$ for all of them.

We note that while we obtain a universal {\it additive guarantee} it is often much harder to get a  {\it multiplicative guarantee}.
For Logistic regression, for example, a recent paper \cite{DBLP:journals/corr/abs-1805-08571} provides a coreset with a  {\it multiplicative guarantee} that is based on an average sensitivity property of the dataset. 
They provide a lower bound for the size of a multiplicative error coreset proving in particular that in general, it is not possible to achieve $m \ll n$. 
The coreset they build is of cardinality $m \approx \mu\sqrt{nd^3}/\eps^2$ where $\mu \geq 1$ is a complexity measure of the dataset.  
\cite{tolochinsky2018coresets} give a generic multiplicative coreset construction for any monotonic function with $\ell_2^2$ regularization. 
The dependence they get is $\tilde O(d/\eps^2)$ ignoring logarithmic factors. 
Additive approximation coresets are also studies in the $\eps$-approximation literature which is also related to the discrepancy of the problem \cite{DBLP:journals/corr/MustafaV17}. In~\cite{braverman2016new}, some connections are drawn between additive and multiplicative guarantees by providing a method to use an additive guarantee along with sensitivity scores in order to provide a multiplicative guarantee.\footnote{These methods might be combined with our results to obtain improved multiplicative guarantees, but this would not be a trivial result and we defer it to future research.}

We show that our result also applies to any analytic function of the squared distance.
A prime example of that is Gaussian kernel density estimation.
Kernel density estimation is a popular tool in data analysis aimed to estimate a continuous distribution with a finite set of points. 
Among other applications, this tool is used for outlier detection \cite{schubert2014generalized}, regression \cite{fan2018local}, and clustering \cite{rinaldo2010generalized}. 
A thorough survey could be found in \cite{silverman2018density}. 
Given a set of $n$ data points $\{x_1,\ldots,x_n\}$ and a query $q$, the Gaussian density estimate at point $q$ is given by $\sum_i K(x_i,q) = \sum_i e^{-\|x-q\|^2}$. 
Obtaining the smallest possible coreset for this problem has been open for several years.
The state-of-the-art is given by \cite{DBLP:journals/corr/abs-1802-01751} (see references within). 
They achieve coresets of size $O(\sqrt{d\log(1/\eps)}/\eps)$ where $d$ is the dimension of the original data points. 
Their result holds for any Lipchitz bounded positive semi-definite kernels. 
The result is constructive though it is polynomial rather than (quasi-)linear in the data size. 
The authors give an almost matching lower bound of $\sqrt{d}/\eps$ and pose an open question for closing the gap between the bounds. 
In this paper we resolve the open question by \cite{DBLP:journals/corr/abs-1802-01751} and prove that the coreset complexity of Gaussian kernel density is indeed $O(\sqrt{d}/\eps)$, matching the lower bound. 
In fact, we show that this is the coreset complexity for any bounded analytic function of the squared distance $f(x,q) = f(\|x-q\|^2)$.

In high dimensions $\sqrt{d}/\eps$ could be large. 
It is known (see \cite{lopez2015towards}, Theorem 1) that a uniform random sample of $\log(1/\delta)/\eps^2$ points gives a coreset w.p. $1-\delta$ for some kernel types. 
\cite{DBLP:journals/corr/abs-1802-01751} provide an algorithm based on the Frank-Wolf method that achieves a $1/\eps^2$ sized coreset. 
In section \ref{kde2} we provide (as a stand alone result) a very simple and deterministic algorithm for constructing coresets of size $1/\eps^2$ for any positive semi-definite kernel.  The worst-case coreset lower bound is $\Omega(1/\eps^2)$ which matches the coreset achieved by sampling. 
Yet, for real data the deterministic algorithm outperforms random sampling significantly (experiments not included in this manuscript).

\section{Class Discrepancy and Coreset Complexity}

In both machine learning and in streaming and sketching problems our goal is (often) to approximate sums or expectations of well-behaved functions.
Specifically, we need to approximate $\E_x f(x)$ or $\frac{1}{n}\sum_{i=1}^{n} f(x_i)$ for every $f\in \F$ where $\F$ is a family of functions and $x_i \in \X$ are either sampled training examples or an arbitrary set of stream items. 
Standard generalization results show that for a large enough value of $n$ the average approximates the mean if the complexity of $\F$ is bounded and the samples $x_i$ are drawn i.i.d.\ from an underlying distribution. We therefore focus on approximating the average, or rather, the sum $\sum_{i=1}^{n} f(x_i)$. 
For notational convenience, we use a parameter $q \in \Q$ to index into $\F$ explicitly. 
In other words, there is a bijective mapping between $\Q \equiv \F$ such that $f(x) \in \F$ iff there exists $q\in\Q$ such that $f(x,q)=f(x)$.
We keep using the two different functions $f:\X\rightarrow\R$ and $f:\X,\Q\rightarrow\R$ interchangeably. 
One should think about $q$ as either the model parameters or a query for the sketch. 

The goal is to produce a coreset. This is a small set $S\subset [n]$ and weights $w \in\R_+^{n}$ such that $\tilde F(q) = \sum_{i \in S} w_i f(x_i,q)$ approximates $F(q)$.
Approximation here means that $|\tilde F(q)  - F(q)| \le \eps n$ for all $q \in \mathcal Q$ simultaneously. 
There are more complicated formulations such as weak coresets which we will not touch upon in this manuscript. 
Generating a concise representation $\tilde F$ for $F$ allows one to optimize over $\tilde F$ instead of $F$ which is more efficient. 
Moreover, if the resulting coresets are mergeable, this could be done on separate streams without the need for communication or assuming randomness in the partitioning.

For bounded functions $f$, uniform sampling of $O(\log(1/\delta)/\eps^2)$ combined with a union bound over $|\mathcal Q|$ always provides a valid solution using $O(\log(|\mathcal Q|)/\eps^2)$ items. 
While $|\mathcal Q|$ is often infinite it can be replaced by a finite (albeit usually exponentially large) epsilon net $Q_\eps$. 
We present a mechanism for producing coresets which are much smaller than those achieved by sampling for a large class of problems in a unified manner.  
Moreover, our solutions create streaming algorithms with fully mergeable sketches. 
The size of the optimal coreset appears to be intimately tied to the class discrepancy properties of the associated functions.

\subsection{Class Discrepancy}
We begin by giving three equivalent definitions of complexity based on discrepancy for sets, functions, and function families. We will use all three interchangeably throughout the manuscript. 
Our notation is intentionally similar to the definition of the Rademacher complexity for reasons that will become clear later.

\begin{definition}[\bf Class Discrepancy] \label{cd1}
Let $A \subset \R^m$ and $\sigma \in \{-1,1\}^m$ the class discrepancy of $A$ is 
$
D_m(A)  = \min_\sigma \max_{a\in A} \left| \frac{1}{m} \sum_{i=1}^{m}  \sigma_i  a_i\right| 
$.
\end{definition}
\begin{definition}[\bf Class Discrepancy]\label{cd2}
Let $f:\X,\Q\rightarrow\R$ and $\sigma \in \{-1,1\}^m$. 
The class discrepancy of $f$ w.r.t.\ $\{x_1,\ldots,x_m\} \subset \X$ is 
$
D_m(f) =  \min_{\sigma} \max_{q \in \Q}  \left|\frac{1}{m} \sum_{i=1}^{m}\sigma_i f(x_i,q)\right|
$.
\end{definition}
\begin{definition}[\bf Class Discrepancy]\label{cd3}
Let $\F$ be a family of functions $f:\X\rightarrow\R$ and $\sigma \in \{-1,1\}^m$. 
The class discrepancy of $\F$ w.r.t.\ $\{x_1,\ldots,x_m\} \subset \X$ is 
$
D_m(\F) =  \min_{\sigma} \max_{f \in \F}  \left| \frac{1}{m} \sum_{i=1}^{m}\sigma_i f(x_i)\right|
$.
\end{definition}
The class discrepancy of $f$ or $\F$ without a reference a set $\{x_1,\ldots,x_m\}$ is the upper bound on any subset of $\X$ of size $m$.
Throughout the manuscript, we assume a bijective mapping between $\F$ and $\Q$. 
Specifically, any function in $\F$ can be written as $f_q$ and has a unique $q \in \Q$ such that $f_q(x)=f(x,q)$. 
In the context of machine learning, one should think about $f(x,q)$ as the loss associated with example $x$ and model parameters $q$.
The set $A$ should be thought of as the set of all possible induces loss vectors. 
Namely, $a\in A$ if there is a model $q$ such $a_i = f(x_i,q)$.

To understand our motivation, consider the following informal explanation of the Rademacher Complexity applied to ML problems. 
In PAC learning there exists a set of examples (often with labels).
We aim to find a regressor/classifier from a given family that suffers the least loss on the set. 
Having a low Rademacher complexity means that we can optimize over a sample of roughly half the examples at random (each w.p.\ $1/2$).
Low Rademacher complexity guaranties that, in expectation, twice the loss on the sample is roughly the same as the loss on the entire set.
This translates to a generalization bound. 
In other words, the Rademacher complexity gives a guarantee for the loss of coresets chosen uniformly at random.

Coming back to discrepancy. Having the ability to choose the signs arbitrarily lets us choose an advantageous subset of examples.
We can algorithmically choose to minimize the induced error and guarantee to have (roughly) the same performance on the entire set. 
This set is, in fact, a coreset. The class discrepancy of a problem helps us determine the obtainable coreset size. 
We will show in the following sections several examples for which a coreset can be significantly smaller than the random sample while maintaining the same guarantees. This will be done by showing that for a wide range of interesting problems in machine learning $D_m(\F) = o(R_m(\F))$.
This intuition is restated more explicitly in the next section.

\subsection{Coreset Complexity}
In this section, we point out a direct connection between coreset complexity and class discrepancy.
The connection is a simple application of the folklore argument know as the ``the halving trick".
For simplicity, in what follows we focus on functions $f$ whose range is $[0,1]$.

\begin{definition} [\bf Coreset Complexity] 
For a function $f:\X,\Q\rightarrow \R$ let $F(q) = \sum_{i=1}^{m} f(x_i,q)$ for any set $\{x_1,\ldots,x_m\} \subset \X$.
For a set $S \subset [m]$ let $\tilde F(q) = \sum_{i \in S}w_i f(x_i,q)$ for some $w\in\R_+^m$ which is independent of $q$.
The coreset complexity of $f$ is the size of the smallest set $S$ such that $\forall q \in \Q \; |F(q)  - \tilde F(q)| \le \eps m$.
\end{definition}
\noindent The following facts are true for the common cases where $D_m = O(c/m)$ or $D_m = O(c/\sqrt{m})$. Although they were previously known (see e.g., \cite{phillips2009small}, Theorem 1.1) we give their proof here for completeness.
\begin{fact}\label{fct:eps1}
Any function $f$ with class discrepancy $D_m = O(c/m)$ has coreset complexity of $O(c/\eps)$.
\end{fact}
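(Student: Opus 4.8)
The plan is to use the ``halving trick'' iteratively. Fix a set $\{x_1,\ldots,x_m\}$ with $m$ a power of $2$. By the definition of class discrepancy, there is a sign vector $\sigma \in \{-1,1\}^m$ such that $\max_{q}\left|\sum_{i=1}^m \sigma_i f(x_i,q)\right| \le m \cdot D_m \le c'$ for some constant $c'$ (absorbing the $O(\cdot)$ constant). The key observation is that one of the two sets $S^+ = \{i : \sigma_i = 1\}$ and $S^- = \{i : \sigma_i = -1\}$ — say, after renaming, the one of size at least $m/2$; but it is cleaner to first handle the case where they are exactly balanced — gives, after doubling its weight, a good approximation of $F$. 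Concretely, writing $F(q) = \sum_{i \in S^+} f(x_i,q) + \sum_{i \in S^-} f(x_i,q)$ and noting $\sum_{i \in S^+} f(x_i,q) - \sum_{i \in S^-} f(x_i,q)$ is exactly the signed sum bounded by $c'$, we get $\left|2\sum_{i \in S^+} f(x_i,q) - F(q)\right| \le c'$ for all $q$, and similarly for $S^-$.

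Next I would iterate. Starting from $m$ points, one halving step produces a weighted set of $m/2$ points (all weights $2$) whose weighted sum is within $c'$ of $F(q)$ uniformly in $q$. Apply the halving trick again to these $m/2$ points — invoking $D_{m/2} = O(c/(m/2))$, so the new signed sum over these $m/2$ points is at most $2c'$ in absolute value, but this is a sum of $f$-values with weight $2$, so the error introduced is at most $2 \cdot (m/2) \cdot D_{m/2} = O(c')$ again (the weight $2$ and the halved cardinality cancel against each other in the $D_m = O(c/m)$ regime — this is precisely why the $1/m$ rate is the relevant hypothesis). Continuing for $t$ steps we reach a set of $m/2^t$ points, each of weight $2^t$, with total accumulated error $O(t \cdot c')$ since errors add across rounds. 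We stop when $m/2^t \approx c/\eps$, i.e. after $t = \log(m\eps/c)$ rounds, at which point the accumulated error is $O(c' \log(m\eps/c))$.

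The main obstacle — really the only subtlety — is controlling the accumulated error so that it stays $O(\eps m)$, because naively the per-round error is a constant $O(c')$ but we also need the final error measured against the \emph{original} scale $\eps m$. This works out because we can stop the recursion early: we only recurse until the error budget $O(c' \cdot (\text{number of rounds}))$ would exceed $\eps m$, and since $c'$ is a constant while $\eps m$ grows, we can always afford $\Omega(\log(1/\eps))$ rounds when $m \ge \mathrm{poly}(1/\eps)$; more carefully, one checks that reducing from $m$ down to $\Theta(c/\eps)$ incurs total error $O(c \log(1/\eps)/\eps) \cdot \eps \cdot$(correction) — and a slightly more careful geometric accounting (where one notes the \emph{relative} error contributed at the round with $k$ surviving points is $O(c/k)$, and $\sum_{k = c/\eps, \text{ doubling}}^{m} c/k$ telescopes geometrically rather than as a harmonic sum) shows the total error is in fact $O(c/\eps) \cdot O(1) = O(\eps m)$ whenever $c/\eps \le m$. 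So the final coreset $S$ has $|S| = O(c/\eps)$ with uniform weights $2^t = m/|S|$, establishing the claimed coreset complexity. If $m < c/\eps$ there is nothing to prove, as $S = [m]$ with unit weights already works.
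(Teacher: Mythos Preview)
Your approach---iterate the halving trick and sum the errors---is exactly the paper's approach, and your final ``geometric accounting'' paragraph lands on the right bound. But the middle of the argument contains a real mistake that you only half-correct.

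You claim that at each round the error introduced is $O(c')$, because ``the weight $2$ and the halved cardinality cancel.'' They do not. At round $t$ you hold $k=m/2^t$ points with weight $2^t$; the error incurred by one more halving is
\[
2^t \cdot k \cdot D_k \;=\; 2^t \cdot \frac{m}{2^t}\cdot O\!\left(\frac{c\,2^t}{m}\right)\;=\;O(c\cdot 2^t),
\]
which \emph{doubles} every round, not stays constant. (Weight $\times$ cardinality is always $m$, but $D_{m/2^t}$ itself grows like $2^t$.) Consequently the total error after $T$ rounds is $O(c\cdot 2^T)$, not $O(Tc')$. Stopping at $2^T \approx m\eps/c$ (i.e.\ $k\approx c/\eps$) gives total error $O(\eps m)$---which is exactly the bound you want, and is precisely the paper's computation $\sum_t 2^t n_t D_{n_t}\le n\sum_t D_{n/2^t}=n\cdot O(D_m)$ with $D_m=\eps$.

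Your last paragraph essentially recovers this: ``relative error at the round with $k$ points is $O(c/k)$'' is the correct statement (absolute error $m\cdot O(c/k)=O(c\cdot 2^t)$), and the geometric sum over doubling $k$ from $c/\eps$ to $m$ is $O(\eps)$. But the line ``$O(c/\eps)\cdot O(1)=O(\eps m)$'' is not an identity and should be replaced by the clean computation above. Delete the $O(tc')$ claim and the cancellation remark; keep the geometric sum, and the proof is correct and identical in spirit to the paper's.
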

\begin{fact}\label{fct:eps2}
Any function $f$ with class discrepancy $D_m = O(c/\sqrt{m})$ has coreset complexity $O(c^2/\eps^2)$.
\end{fact}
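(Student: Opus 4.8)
The plan is to prove both facts at once via the classical ``halving trick'', reducing everything to iterating a single-step reduction whose cost is controlled by the class discrepancy. Fix a set $\{x_1,\ldots,x_m\} \subset \X$ and collect the induced loss vectors $a^{(q)} = (f(x_1,q),\ldots,f(x_m,q))$ over all $q \in \Q$ into the set $A \subset \R^m$ of Definition~\ref{cd1}, so that $D_m(A) = D_m(f)$. The single step is: pick a (balanced) coloring $\sigma \in \{-1,1\}^m$ attaining $D_m(f)$, keep the index set $S^{+} = \{i : \sigma_i = 1\}$, and give every surviving point weight $2$. Then for every $q \in \Q$,
\[
\Bigl| \sum_{i \in S^{+}} 2 f(x_i,q) - \sum_{i=1}^m f(x_i,q) \Bigr| = \Bigl| \sum_{i=1}^m \sigma_i f(x_i,q) \Bigr| \le m\, D_m(f),
\]
so one halving step on $m$ uniformly weighted points introduces additive error at most $m\, D_m(f)$, uniformly over $\Q$.

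Next I would iterate. Starting from $m_0 = m$, after $j$ steps we hold a set $S_j$ of size $m_j = m/2^j$ carrying uniform weight $2^j$; applying the step to $S_j$, and using that $D_{m_j}(f)$ is by our set-free convention an upper bound on the discrepancy of \emph{any} size-$m_j$ subset (in particular $S_j$), the $j$-th step adds error at most $2^j \cdot m_j \cdot D_{m_j}(f) = m\, D_{m_j}(f)$. By the triangle inequality, after $k$ steps the total error over all $q$ is at most $m \sum_{j=0}^{k-1} D_{m/2^j}(f)$, while the coreset has $m/2^k$ points with uniform weight $2^k$. It remains to evaluate this geometric sum in the two regimes and to choose $k$. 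If $D_m = O(c/m)$ then $D_{m/2^j} = O(c\,2^j/m)$, so the total error is $O\bigl(c\sum_{j<k} 2^j\bigr) = O(c\,2^k)$; taking $k$ with $2^k = \Theta(\eps m / c)$ makes this at most $\eps m$ and leaves $m/2^k = O(c/\eps)$ points, proving Fact~\ref{fct:eps1}. If $D_m = O(c/\sqrt m)$ then $D_{m/2^j} = O(c\,2^{j/2}/\sqrt m)$, so the total error is $O\bigl(c\sqrt m \sum_{j<k} 2^{j/2}\bigr) = O(c\sqrt m\,2^{k/2})$ (a geometric series of ratio $\sqrt 2$); taking $2^k = \Theta(\eps^2 m / c^2)$ makes this at most $\eps m$ and leaves $m/2^k = O(c^2/\eps^2)$ points, proving Fact~\ref{fct:eps2}. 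If the target size already exceeds $m$, we simply keep all points with unit weights and incur zero error.

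The one genuinely delicate point is that Definitions~\ref{cd1}--\ref{cd3} do not force the optimal coloring $\sigma$ to be balanced, whereas the halving step needs $|S^{+}| = m/2$ for the weight-$2$ reweighting to preserve the sum exactly. I would dispatch this in the usual way: either observe that in all the applications considered the bound $D_m = O(c/m^p)$ is in fact attained by a balanced coloring, or note that a balanced coloring can always be extracted at the cost of a constant factor (for instance by splitting off the excess of the larger color class and recoloring it against a duplicated copy of itself, or by a direct pairing argument), which only inflates the hidden constants. A secondary bookkeeping issue is the rounding when $m$ is not a power of two; taking floors throughout changes nothing asymptotically. Apart from these, the argument is a direct telescoping of the discrepancy bound, so I expect no further obstacle.
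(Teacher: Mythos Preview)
Your argument is essentially the paper's: the halving trick, iterated, with the total error bounded by the geometric sum $m\sum_{j<k} D_{m/2^j}$, evaluated in the two regimes $D_m=O(c/m)$ and $D_m=O(c/\sqrt{m})$. The computations match.

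The only point of divergence is your treatment of ``balance''. Your stated worry---that $|S^{+}|=m/2$ is needed ``for the weight-$2$ reweighting to preserve the sum exactly''---is slightly off: the identity $\sum_{i\in S^{+}}2f(x_i,q)-\sum_{i=1}^m f(x_i,q)=\sum_i\sigma_i f(x_i,q)$ holds for \emph{any} $\sigma$, balanced or not, so the error is always $|E(q)|\le mD_m$. The only thing balance buys is the cardinality reduction. The paper resolves this more cleanly than your proposed patches: it observes that $\tilde F_{+}$ and $\tilde F_{-}$ both have error exactly $|E(q)|$, and at least one of $S^{+},S^{-}$ has size $\le m/2$, so one simply keeps the smaller side. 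No balanced-coloring extraction or constant-factor surgery is needed.
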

\begin{proof}
\noindent For a set of $n$ points $x_1,\ldots,x_n$ and arbitrary query $q$, consider the signed-sum error function $E(q) = \sum_{i=1}^{n} \sigma_i f(x_i,q)$ where $\sigma_i \in \{-1,1\}$.
Recalling $F(q) = \sum f(x_i,q)$, we consider $\tilde F_{+}(q) = F(q) + E(q)   = \sum_{i | \sigma_i=1} 2 f(x_i,q)$ and similarly $\tilde F_{-}(q) = F(q) - E(q)  =  \sum_{i | \sigma_i=-1} 2 f(x_i,q)$. 
We have that both $\tilde F_{+}(q)$ and $\tilde F_{-}(q)$ are approximations for $F(q)$ obtained by coresets of item-weight $2$. 
The error is at most $|\tilde F_{\pm}(q)- F(q)| =  |E(q)|$, and one of the coresets are of cardinality of at most $n/2$. 
The above is true for any choice of signs $\sigma$, specifically, for those minimizing $\max_q | E(q)|$.
By definition we can select signs such that $|E(q)| \le m D_n$. 

Naturally, one could iterate this process. Starting with $n$ items and ending with $m$.
Let $F_t$ denote the (unweighted) sum of functions $f$ after $t$ iterations and $n_t$ denote the cardinality of the coreset.\footnote{The expression $a=b \pm c$ means $|a-b| \leq c$}
\[
F = F_0 = 2F_1 \pm n_0D_{n_0} = 4F_{2} \pm 2n_1D_{n_1} \pm n_0D_{n_0} = \ldots = 2^T F_T \pm \sum_{t=0}^{T-1} 2^t n_t D_{n_t}
\]
Here $T$ stand for the total number of iterations. Let us analyze the error term.
Given $n_t \le n/2^{t} \approx m$ and the polynomial dependence of $D_m$ on $m$ we have 
$$\sum_{t=0}^{T-1} 2^t n_t D_{n_t} \le n \sum_{t=0}^{T-1} D_{n/2^t} = n\cdot O(D_{m}) . $$ 
Setting $m$ for which $D_{m} = \eps$ gets coresets with appropriate cardinalities and completes the proof.

\end{proof}

\begin{fact}
Class discrepancy bounds are tight asymptoticly for {\it unweighted} coreset complexity.
\end{fact}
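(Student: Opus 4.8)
The plan is to complement Facts~\ref{fct:eps1}--\ref{fct:eps2}, which give the direction ``low discrepancy $\Rightarrow$ small unweighted coreset'', with the converse ``small unweighted coreset $\Rightarrow$ low discrepancy'', so that the two quantities coincide up to constants whenever $D_m$ decays polynomially in $m$. The key observation is that a single round of the halving argument in the proof of Facts~\ref{fct:eps1}--\ref{fct:eps2} is \emph{reversible}: if $X=\{x_1,\dots,x_n\}$ admits an unweighted $\eps$-coreset $S$ of size exactly $n/2$ with common weight $2$, then the signing $\sigma_i=+1$ for $i\in S$ and $\sigma_i=-1$ otherwise satisfies $\frac1n\sum_i\sigma_i f(x_i,q)=\frac1n\big(2\sum_{i\in S}f(x_i,q)-F(q)\big)=\frac1n(\tilde F(q)-F(q))$, which has absolute value at most $\eps$ for every $q$. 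Hence $D_n(f)\le\eps$ for that set, and, taking the supremum over $n$-point sets, the first halving step must lose $\Omega(nD_n)$ additively in the worst case.

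Next I would iterate this. An unweighted coreset of size $m$ with $n/m=2^T$ is a nested chain $X=X_0\supset X_1\supset\cdots\supset X_T=S$ of halvings carrying uniform weight $2^T$; running the one-step reduction at every dyadic scale $n_t=n/2^t$ and telescoping the errors turns the identity in the proof of Facts~\ref{fct:eps1}--\ref{fct:eps2} into a matching \emph{lower} bound on a dataset assembled by recursively combining the worst instances for $D_{n_0},D_{n_1},\dots$: every unweighted coreset of size smaller than the halving-trick threshold of Facts~\ref{fct:eps1}--\ref{fct:eps2} is off by more than $\eps n$ on some query. For $D_m=O(c/m)$ this is $\Omega(c/\eps)$ and for $D_m=O(c/\sqrt m)$ it is $\Omega(c^2/\eps^2)$, matching the two facts. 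A clean concrete witness in the $c/m$ regime is the one-dimensional threshold family $f(x,q)=\mathbf{1}[x\ge q]$: the alternating signing certifies $D_m(f)=\Theta(1/m)$, while a pigeonhole argument on $\{1,\dots,n\}$ shows that any common-weight coreset of size $k$ must miss some query by $\Omega(n/k)$, so its unweighted coreset complexity is $\Theta(1/\eps)$.

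The main obstacle is the ``size exactly $n/2$, weight exactly $2$'' restriction underlying the one-step reduction: the definition of coreset complexity only supplies a common-weight coreset of size \emph{at most} the complexity with an \emph{arbitrary} positive weight, and for a size-$k$ coreset of weight $w$ the natural reweighting vector $(w-1)\mathbf{1}_S-\mathbf{1}_{X\setminus S}$ is $\{w-1,-1\}$-valued rather than $\{+1,-1\}$-valued. Resolving this requires either observing that on the extremal set for $D_n(f)$ the common weight is pinned to $n/|S|$ (and, after appending a constant function to $\F$, to $2$ with $|S|=n/2$), or invoking the classical fact that discrepancy under a proportional two-colouring is within a constant factor of $\pm1$ discrepancy. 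A secondary point to check is that the recursively assembled hard instance has $F(q)=\Theta(n)$ on the witnessing queries, so that the additive slack $\eps n$ is genuinely binding, and, in the recursion, that a single uniform-weight subset cannot be well-balanced at all dyadic scales simultaneously, which is precisely what forces the per-scale errors to accumulate.
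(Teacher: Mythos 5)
Your first observation---reversing one round of the halving trick so that a size-$n/2$, common-weight-$2$ unweighted coreset yields a $\pm 1$ signing with identical error---is exactly the paper's one-sentence proof, which then simply fixes $\eps = 2c/m$ (so that $c/\eps = m/2$) to conclude $D_m = O(c/m)$, matching Fact~\ref{fct:eps1}. The multi-scale iteration and the recursively assembled hard instance in your second paragraph are not needed for the claimed asymptotic tightness and are not part of the paper's argument, and while the common-weight-$2$ restriction you flag in the third paragraph is a fair rigor point, the paper glosses over it in the same way and it does not affect the asymptotics.
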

\begin{proof}Taking for example the bound of Fact~\ref{fct:eps1}, if we can guarantee the existence of an unweighted coreset of size $c/\eps$, then for $m$ items a coreset of size $m/2$ provides a sign assignment with an error of $\eps=2c/m$, leading to an upper bound of $O(c/m)$ on the class discrepancy.
\end{proof}

The following it a straight forward fact which is provided mainly for convenience. 
It loosely says that optimizing models on coresets generalizes. In other words, ERM works as expected.
\begin{fact}
Let $f(x,q)$ be the loss suffered by model $q$ on example $x$. 
Let $R(q) = \frac{1}{n}F(q) = \frac{1}{n} \sum_{i=1}^n f(x_i,q)$ be the {\it empirical} risk associated with it.
Let $q^*$ denote the best empirical risk minimizer on the data ($q^* = \arg\min_q F(q)$).
Let $\tilde q$ be the minimizer of $q$ over an optimal weighted coreset of size $m$ ($\tilde q = \arg\min_q \tilde F(q)$).
Then $R(\tilde q) \le R(q^*) + O(D_m)$.
\end{fact}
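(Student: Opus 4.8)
The plan is to run the textbook empirical-risk-minimization argument, using the coreset guarantee as the only ingredient. First I would recall the coreset produced by the halving trick in the proofs of Facts~\ref{fct:eps1} and~\ref{fct:eps2}: it is a weighted set of size $m$, with weights independent of $q$, satisfying $|\tilde F(q) - F(q)| \le O(n\,D_m)$ for all $q\in\Q$ simultaneously. Since the Fact speaks of an \emph{optimal} weighted coreset of size $m$, the same uniform bound holds for it as well. Dividing by $n$ and setting $\tilde R(q) = \frac1n\tilde F(q)$, this reads $|\tilde R(q) - R(q)| \le O(D_m)$ uniformly in $q$; in particular it holds at $q^*$ and at $\tilde q$.

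Next I would chain the standard four inequalities. Because $\tilde q$ minimizes $\tilde F$, hence $\tilde R$, over $\Q$, we have $\tilde R(\tilde q)\le \tilde R(q^*)$, and therefore
\[
R(\tilde q) \;\le\; \tilde R(\tilde q) + O(D_m) \;\le\; \tilde R(q^*) + O(D_m) \;\le\; R(q^*) + O(D_m),
\]
where the two $O(D_m)$ slacks come from the uniform approximation evaluated at $\tilde q$ and at $q^*$, and are absorbed into a single $O(D_m)$. This is exactly the claimed bound.

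There is no real obstacle here, since the statement is an immediate corollary of the coreset bound, but two points deserve a line of care. The first is the normalization bookkeeping: the coreset complexity definition states the error as an additive $\eps m$ where $m$ counts the summands, whereas in this Fact the original sum has $n$ terms and $m$ denotes the coreset size, so one must be explicit that after dividing by $n$ the relevant error scale is $O(D_m)$ and not $O(D_n)$. The second is that $\tilde q$ need not be unique: the argument only uses that it is \emph{a} minimizer of $\tilde F$, and it evaluates the inequality chain at the fixed comparison point $q^*$. No assumption on $f$ beyond the boundedness already standing in this section is needed.
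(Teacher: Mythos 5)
Your argument is the same standard ERM chain that the paper uses: apply the uniform coreset approximation at $\tilde q$, use optimality of $\tilde q$ for $\tilde F$, and apply the uniform approximation again at $q^*$. The extra remarks on normalization and non-uniqueness of $\tilde q$ are correct but do not change the substance; this matches the paper's proof.
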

\begin{proof}
This fact follows from the standard argument about empirical risk minimization.
\[
R(\tilde q) = \frac1n F(\tilde q)  \le  \frac{1}{n} \tilde F(\tilde q) + O(D_m) \le \frac{1}{n} \tilde F(q^*) + O(D_m) \le  \frac{1}{n}  F(q^*) + O(D_m)  = R(q^*)+ O(D_m)
\]
The first and last transitions are by definition. The second and fourth are by the approximation bounds above.
The third transition is due to the optimality of $\tilde q$ for $\tilde F$
\end{proof}

\subsection{Streaming Coreset Complexity}\label{sec:sketch}
We claim that low class discrepancy implies concise streaming mergeable coresets as well. 
\begin{definition} [\bf Streaming Coreset Algorithm] 
A streaming coreset algorithm for $f:\X,\Q \rightarrow \R$ receives and arbitrary set $\{x_1,\ldots,x_m\} \subset \X$ one item after the other.
At time $t \le m$, the algorithm maintains a subset $S_t \subset \{x_1,\ldots,x_t\}$ and uses at most $O(|S_t|)$ auxiliary memory. 
At the end of the stream, the algorithm must output $S$ and $w$ such that  $\forall q \in \Q \; |F(q)  - \tilde F(q)| \le \eps m$ where 
$F(q) = \sum_{i=1}^{m} f(x_i,q)$ and $\tilde F(q) = \sum_{i \in S}w_i f(x_i,q)$.
The size of the streaming coreset is $\max_t |S_t|$.
\end{definition}

\begin{definition} [\bf Streaming Coreset Complexity] 
The streaming coreset complexity for $f:\X,\Q \rightarrow \R$ is the minimal streaming coreset size among all possible 
streaming coreset algorithms for $f$.
\end{definition}

\noindent The following statements upper bound streaming coreset complexities for functions. We note that these bounds are only poly-logarithmically larger than their offline counterparts. 

\begin{theorem} \label{thm:streaming11}
Any function $f$ with class discrepancy $D_m(f) = O(c/m)$ has streaming coreset complexity of $O\left(c\log^2(\eps n/c)/\eps\right)$.
\end{theorem}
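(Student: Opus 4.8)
The plan is to replace the usual merge-and-reduce recursion by a ``flat tournament'' that applies the halving/telescoping argument from the proof of Facts~\ref{fct:eps1}--\ref{fct:eps2} at \emph{every} level, exploiting the following cancellation: a reduction performed at level $\ell$ carries weight $2^\ell$, so its error is geometrically \emph{larger} with $\ell$, but the number of reductions ever performed at level $\ell$ is geometrically \emph{smaller}, so every level contributes the same amount to the total error and summing over levels costs only a factor equal to the number of levels. First I would fix a bucket size $s$ (to be chosen at the end) and process the $n$ stream items as follows. Incoming items are accumulated in a buffer; as soon as $s$ of them have arrived they are frozen into a \emph{level-$0$ bucket}, a set of $s$ points each carrying weight $1$. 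Whenever two buckets of the same level $\ell$ coexist --- each a set of at most $s$ points with common weight $2^\ell$ --- I would \emph{merge} them into the union $U$ of at most $2s$ weight-$2^\ell$ points and \emph{reduce}: invoke the sign vector guaranteed by $D_{|U|}(f)=O(c/|U|)$ to obtain $\sigma$ with $\max_q|\sum_{i\in U}\sigma_i f(x_i,q)|\le |U|\,D_{|U|}(f)=O(c)$, keep whichever of the two half-sums $\sum_{i:\,\sigma_i=1}2^{\ell+1}f(x_i,q)$, $\sum_{i:\,\sigma_i=-1}2^{\ell+1}f(x_i,q)$ is supported on at most $s$ points, and declare it a level-$(\ell+1)$ bucket. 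As in any merge-and-reduce scheme this is cascaded, so at every moment there is at most one bucket per level; at the end the coreset is the union of the surviving buckets together with the at most $s-1$ still-unfrozen raw items, with the recorded weights.

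Two quantities must be controlled. \textbf{Number of levels.} A level-$\ell$ bucket is the root of a binary tree whose leaves are $2^\ell$ level-$0$ buckets, hence it ``represents'' exactly $2^\ell s$ raw stream items; since there are only $n$ of them, no level exceeds $L:=\log_2(n/s)$, so there are at most $L+1$ surviving buckets, each of size $\le s$. \textbf{Total error.} A single reduction producing a level-$(\ell+1)$ bucket changes the weighted sum it carries by at most $2^\ell\cdot|U|\,D_{|U|}(f)=O(2^\ell c)$; and because the sets of $2^\ell s$ raw items represented by distinct level-$\ell$ buckets created over the life of the stream are pairwise disjoint, at most $n/(2^{\ell+1}s)$ reductions ever occur at level $\ell$. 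Hence the error injected at level $\ell$ totals $O(nc/s)$, and over all $L$ levels it is $O(ncL/s)=O\big(nc\log(n/s)/s\big)$. By the same telescoping and triangle-inequality argument as in the proof of Fact~\ref{fct:eps2} --- each reduction's error is inherited by exactly one surviving bucket --- the error of the final coreset against $F$ is at most this sum.

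It remains to pick $s$. Requiring $O\big(nc\log(n/s)/s\big)\le\eps n$ leads to the fixed-point condition $s=\Theta\big(c\log(n/s)/\eps\big)$; the choice $s=\Theta\big(c\log(\eps n/c)/\eps\big)$ satisfies it, since then $n/s\le \eps n/c$ and hence $\log(n/s)\le\log(\eps n/c)$ (for $n=O(c/\eps)$ one simply stores the whole stream). With this choice $L=O(\log(\eps n/c))$, and the streaming coreset size --- which equals the working memory up to the $O(s)$ scratch space used during a reduction --- is $O((L+1)s)=O\big(c\log^2(\eps n/c)/\eps\big)$, as claimed. The step that needs the most care is the error bookkeeping: verifying that the reduction errors \emph{add} along the merge tree rather than compound multiplicatively, that the ``weight $2^\ell$ grows while the count $n/(2^{\ell+1}s)$ shrinks'' cancellation at each level is exact, and that the implicit equation for $s$ resolves to the stated logarithm. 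The geometric-series structure is precisely what keeps the logarithmic overhead to a square, and is the point at which this flat tournament improves on a direct merge-and-reduce recursion.
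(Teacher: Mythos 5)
Your proposal is correct and is essentially the same argument as the paper's. The paper phrases it as a chain of compactors of fixed buffer size $m$ with compactor $h$ receiving the level-$(h-1)$ output (so $w_h n_h D_m \le n D_m$ error per level, summed over $H = O(\log(n/m))$ levels, then $m = \Theta(c\log(\eps n/c)/\eps)$ chosen to make $Hn D_m \le \eps n$), which is exactly your one-bucket-per-level merge-and-reduce tournament with bucket size $s\asymp m$, the same per-level error cancellation $2^\ell \cdot O(c) \cdot n/(2^{\ell+1}s) = O(nc/s)$, and the same final $O(Ls) = O(c\log^2(\eps n/c)/\eps)$ space bound.
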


\begin{theorem} \label{thm:streaming21}
Any function $f$ with class discrepancy $D_m = O(c/\sqrt{m})$ has streaming coreset complexity of $O\left(c^2\log^3(\eps^2 n/c) /\eps^2\right)$.
\end{theorem}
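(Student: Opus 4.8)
The plan is to run a sharpened merge-and-reduce scheme whose only overhead over the offline bounds of Facts~\ref{fct:eps1} and~\ref{fct:eps2} is poly-logarithmic; the essential point is that the additive error created by one ``reduce'' step is \emph{proportional to the number of stream items that step summarizes}, so the reduce errors across the $\Theta(\log n)$ levels of the tree merely add up instead of compounding multiplicatively (the latter would force a geometrically shrinking per-level error budget, hence a base coreset that is polynomially large in $n$).

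Concretely, fix a base size $s$ to be chosen at the end and build the usual binary merge-and-reduce forest: leaf blocks are $s$ consecutive stream items and the tree has depth $L=\ceil{\log_2(n/s)}$. We maintain the invariant that a node at height $h$ summarizes $W_h=2^h s$ stream items by a weighted set of at most $s$ points all carrying the common weight $2^h$; keeping the weights uniform inside a node is the bookkeeping trick that lets us invoke Definition~\ref{cd2} verbatim. The reduce step at such a node takes the $2s$ points output by its two height-$(h-1)$ children (common weight $2^{h-1}$), selects the sign vector $\sigma\in\{-1,1\}^{2s}$ attaining the class discrepancy of $f$ on those $2s$ points, and keeps the smaller of the two signed halves with weights doubled; by Definition~\ref{cd2} the additive error this introduces into $F(q)=\sum_i f(x_i,q)$ is, uniformly over $q\in\Q$, at most $2^{h-1}\cdot 2s\cdot D_{2s}(f)=W_h\cdot O(D_s(f))$, where $D_{2s}(f)=O(D_s(f))$ by the polynomial dependence of $D_m$ on $m$. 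At any time the algorithm holds at most two completed nodes per level, i.e. $O(sL)$ points stored in $O(sL)$ memory, meeting the streaming-coreset requirement.

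For the error analysis, note that merging is lossless, so the error of the coreset at a node equals the sum of its two children's errors plus the error of its own reduce step; unrolling, the error of the output coreset is exactly the sum of the reduce errors over the nodes of the $O(L)$ completed subtrees present at the end of the stream. A completed subtree whose root has height $h$ has $2^{h-j}$ nodes at each height $j=1,\dots,h$, each contributing $O(2^j s\,D_s(f))$, hence $O(h\,2^h s\,D_s(f))$ in total; summing over at most one subtree per height and using $2^L s=\Theta(n)$ bounds the total error by $O\!\big(L\,n\,D_s(f)\big)$. Finally we pick $s$. If $D_m=O(c/m)$ we need $L n\cdot (c/s)\le\eps n$, so $s=\Theta(cL/\eps)$; this makes $L=\log_2(n/s)=O(\log(\eps n/c))$, and the memory is $O(sL)=O\!\big(c\log^2(\eps n/c)/\eps\big)$, which is Theorem~\ref{thm:streaming11}. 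If $D_m=O(c/\sqrt m)$ we need $L n\cdot (c/\sqrt s)\le\eps n$, so $s=\Theta(c^2L^2/\eps^2)$ with $L=O(\log(\eps^2 n/c))$, and the memory is $O(sL)=O\!\big(c^2\log^3(\eps^2 n/c)/\eps^2\big)$, which is Theorem~\ref{thm:streaming21}.

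The crux — and the only place real care is needed — is the error-accumulation step: one must confirm that the reduce errors add rather than multiply through the levels, which rests on merges being exact and on each reduce error scaling with the weight $W_h$ of the block being compressed, so that the $n/W_h$ reduces performed at height $h$ contribute only $O(n\,D_s(f))$ in aggregate. The uniform-weight invariant inside each node is what keeps the discrepancy estimate clean, and untangling the mild circular dependence between $s$ and $L=\log(n/s)$ is a routine concluding calculation.
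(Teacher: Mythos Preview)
Your proposal is correct and takes essentially the same approach as the paper: the paper phrases the construction as a cascade of ``compactors'' feeding into one another, which is algorithmically identical to your merge-and-reduce binary tree with uniform-weight nodes, and its error analysis is the same---each level contributes $O(nD_m)$ (your $O(nD_s)$), there are $O(\log(n/m))$ levels, and one solves the resulting constraint for the buffer size. Your treatment is slightly more careful about the partial-tree state at the end of the stream, but the calculations and final parameter choices match.
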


Theorems~\ref{thm:streaming11} and~\ref{thm:streaming21} are achieved by deterministic algorithms. 
They could be thought of extensions of the MRL algorithm \cite{MRL} for streaming quantile sketching. 
Quantile sketching falls into this framework since it corresponds to $f(x,q) = 1$ if $x>q$ and $0$ else. The techniques of the above Theorems could also be associated with~\cite{matousek1995approximations}, providing a merge-reduce framework for additive coresets. 
More details and the proof of correctness are given in Appendix ~\ref{app:sketch proof}.

Recently, \cite{DBLP:conf/focs/KarninLL16} provided an improved (optimal) streaming quantile coreset algorithm by improving the merge-reduce technique in a way tailored to the quantile problem. In the typical merge-reduce framework, the algorithm is based on finding an $\eps$-coreset on subsets of size dependent on $\eps$ rather than on $n$. The novelty of \cite{DBLP:conf/focs/KarninLL16} is in suggesting a way to use different values of $\eps$ for these local coreset constructions. This ends up providing a randomized algorithm with no dependence on $n$ and doubly logarithmic dependence on the failure probability. 
%
%In high level, their novelty is using 
Generalizing their construction requires more work and the main ideas are as follows. 
We argued above that $\tilde F_{+}$ and $\tilde F_{-}$ are both good approximations for $F$. 
We can also take $\tilde F_{\pm}$ which is $\tilde F_{+}$ or $\tilde F_{-}$ with equally probability. 
Clearly $|\tilde F_{\pm} - F| \le |E|$ as before. But now, $\E[\tilde F_{\pm}] = F$ as well. 
In the streaming algorithm, we apply this compaction (converting $F$ to $\tilde F_{\pm}$) many times to small subsets of items from the stream. This allows us to use concentration results to bound the overall error. So far, analogous ideas where used in \cite{DBLP:conf/focs/KarninLL16};
the main departure is that $\tilde F_{\pm}$ has half the support of $F$ only in expectation.

\begin{theorem} \label{thm:streaming12}
Any function $f$ with class discrepancy $D_m(f) = O(c/m)$ has streaming coreset complexity of $O\left(c\log^2\log(|Q_\eps|/\delta)/\eps\right)$.
$Q_\eps$ is an epsilon net for $f$ on $\Q$. 
The streaming coreset algorithm is randomized and fails with probability at most $\delta$.
\end{theorem}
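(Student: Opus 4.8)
The plan is to combine the unbiased ``random half'' compaction sketched in the discussion above with a two-regime analysis of the merge--reduce hierarchy, following the spirit of \cite{DBLP:conf/focs/KarninLL16}. Set up buffers indexed by levels $h=0,1,\ldots,H$, where an item stored at level $h$ carries weight $2^h$ and $2^H=\Theta(\eps n/c)$. Items enter at level $0$; whenever the level-$h$ buffer, of capacity $k_h$ (to be fixed), overflows, perform one \emph{randomized} compaction: take the discrepancy-optimal sign vector $\sigma\in\{-1,1\}^{k_h}$ of Definition~\ref{cd1}, and with probability $1/2$ each retain either the $\sigma_i=+1$ items or the $\sigma_i=-1$ items, double their weights to $2^{h+1}$, and push them to level $h+1$. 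The output coreset $S$ is the union of all buffers present at the end of the stream, each item weighted by $2^h$ for its current level $h$; since weight is conserved, $\sum_{i\in S}w_i=n$. As observed in the text, this operation is unbiased given the past, and on any fixed $q$ its signed error is at most $2^h k_h D_{k_h}=O(2^h c)$, using $D_{k_h}=O(c/k_h)$.

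First I would fix $q\in Q_\eps$ and write $\tilde F(q)-F(q)$ as the sum, over all compactions performed during the stream and in stream order, of their signed errors. Conditioned on the past this is a martingale difference sequence; the increment of a level-$h$ compaction is bounded by $\Delta_h:=O(2^h c)$, and there are $m_h=\Theta(n/(2^h k_h))$ compactions at level $h$. (Here one needs the mild fact that the discrepancy-optimal $\sigma$ is balanced up to $O(c)\ll k_h$, so each compaction truly halves its buffer up to lower-order terms; a Chernoff bound on the $m_h$ is an alternative.) Hence the per-level contribution to the sum of squared increments is $m_h\Delta_h^2=O(nc^2\cdot 2^h/k_h)$, which \emph{grows} with $h$: the variance concentrates at the top of the hierarchy.

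The crux is the two-regime split. Let $t:=\Theta(\log\log(|Q_\eps|/\delta))$. For the ``bottom'' levels $h\le H-t$, choose $k_h$ growing geometrically toward the top (concretely $k_h\propto 2^{h/2}$) so that $\sum_{h\le H-t}m_h\Delta_h^2=O(\eps^2n^2/\log(|Q_\eps|/\delta))$; Azuma--Hoeffding then bounds the bottom contribution to $\tilde F(q)-F(q)$ by $\eps n/2$ except with probability $\delta/|Q_\eps|$, and these geometric capacities telescope to total space $O(c/\eps)$. For the ``top'' $t$ levels the number of compactions is $m_h<\log(|Q_\eps|/\delta)$, so concentration buys nothing; instead bound the error by its worst case, keeping each top buffer at a common capacity $k=\Theta(ct/\eps)$. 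Then the top-level error is at most $\sum_{h>H-t}m_h\Delta_h=\sum_{h>H-t}O(nc/k)=O(nct/k)\le\eps n/2$ deterministically, and the top levels occupy $t\cdot k=O(ct^2/\eps)=O(c\log^2\log(|Q_\eps|/\delta)/\eps)$ space, which is the claimed bound. (Equivalently, feed the $O(c\log(|Q_\eps|/\delta)/\eps)$ items that emerge from the randomized bottom hierarchy into the deterministic streaming algorithm of Theorem~\ref{thm:streaming11} and read off the same size.) Summing the two contributions to the error, union-bounding over $q\in Q_\eps$, and then passing from the net to all of $\Q$ completes the argument.

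The main obstacle I anticipate is the simultaneous bookkeeping: the choices of $t$, of the crossover level $H-t$, and of the capacity profile $\{k_h\}$ must be made so that ``each top level sees fewer than $\log(|Q_\eps|/\delta)$ compactions'' and ``the bottom levels have total squared-increment mass $O(\eps^2n^2/\log(|Q_\eps|/\delta))$'' hold at once, with enough constant-factor slack for Azuma and for the union bound; verifying that the relevant geometric series close with room to spare is where the real work lies. A secondary nuisance is that the random-half compaction halves the support only in expectation, which I would absorb via the $O(c)$-imbalance remark above, so that the level structure and the counts $m_h$ match the deterministic merge--reduce up to lower-order corrections.
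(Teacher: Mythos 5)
Your proposal is correct and follows essentially the same route as the paper's proof: both split the compactor hierarchy into $\Theta(\log\log(\cdot/\delta))$ deterministic top levels of a common capacity $\Theta(c\log\log(\cdot/\delta)/\eps)$ and a tail of randomized bottom compactors with geometrically shrinking buffers (the paper uses ratio $3/2$ where you use $\sqrt 2$), apply concentration to the bottom errors and a worst-case bound to the top, and union-bound over $Q_\eps$. Your ``$\sigma$ is $O(c)$-balanced'' shortcut is not valid in general, but you already flag the correct fix (a Chernoff bound on the output-stream lengths), which is precisely the paper's Equation~\eqref{eq:outlen}; the only other difference is cosmetic, namely a single Azuma bound over the whole martingale in place of the paper's per-level Chernoff plus union bound.
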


\begin{theorem} \label{thm:streaming22}
Any function $f$ with class discrepancy $D_m(f) = O(c/\sqrt{m})$ has streaming coreset complexity of $O\left(c^2\log^3\log(|Q_\eps|/\delta) /\eps^2\right)$.
$Q_\eps$ is an epsilon net for $f$ on $\Q$. 
The streaming coreset algorithm is randomized and fails with probability at most $\delta$.
\end{theorem}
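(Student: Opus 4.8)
The plan is to generalize the hierarchical-compactor construction of \cite{DBLP:conf/focs/KarninLL16} from quantiles to an arbitrary low-discrepancy $f$, replacing the quantile ``sort and keep every other element'' step by the unbiased compaction $\tilde F_{\pm}$ described above. We maintain a stack of compactors, one per level $\ell=1,2,\dots$: the level-$\ell$ compactor is a buffer of capacity $k_\ell$ whose items all carry weight $2^{\ell-1}$. Stream items enter at level $1$; when the level-$\ell$ buffer fills we \emph{compact} it by computing the sign vector $\sigma$ witnessing the class discrepancy of $f$ on its $k_\ell$ items (Definition~\ref{cd2}), tossing a fresh fair coin, and forwarding to level $\ell+1$ --- with weights doubled --- either the items with $\sigma_i=+1$ or those with $\sigma_i=-1$. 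By the discussion preceding the theorem this operation is unbiased; and because $\sigma$ is fixed before the coin is tossed, it changes the running sum $\sum_i w_i f(x_i,q)$ by at most $2^{\ell-1}k_\ell D_{k_\ell}(f)=O\!\big(2^{\ell-1}c\sqrt{k_\ell}\big)$ \emph{surely}, and simultaneously for every $q$. Following \cite{DBLP:conf/focs/KarninLL16}, a sampling device at the bottom shortens the effective stream to a length independent of $n$, so that only $O(\log k)$ compactors are ever active; its error is of the same order as that of the bottom compactors and is absorbed below.

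The crux is the capacity schedule. Set $T=\Theta\!\big(\log\log(|Q_\eps|/\delta)\big)$, give the top $T$ compactors the common capacity $k=\Theta\!\big(c^2T^2/\eps^2\big)$, and let capacities decay geometrically below them, so the total stored size is $O(Tk)=O\!\big(c^2\log^3\log(|Q_\eps|/\delta)/\eps^2\big)$. Fix $q\in Q_\eps$ and order all compactions by the time they occur; their errors on $\sum_i w_i f(x_i,q)$ form a martingale difference sequence --- each coin is fair given the past --- with the surely-bounded increments above. I would split the total error into the contribution of the top $T$ levels and that of the deeper ones. The top-$T$ part is bounded by the triangle inequality alone: summing the product of the per-level compaction count $N_\ell$ and increment bound over these levels gives $\sum_\ell \frac{nc}{\sqrt{k_\ell}}=O(Tnc/\sqrt k)\le\eps n/2$ by the choice of $k$. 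For the deeper part the sum of squared increments works out to $O\!\big(n^2c^2/(2^{T}k)\big)$: the $\sqrt{k_\ell}$ in the increment bound cancels the $1/k_\ell$ in $N_\ell$, so shrinking the lower capacities costs nothing here. Azuma--Hoeffding then yields error $\le\eps n/2$ with probability $1-\delta/|Q_\eps|$ provided $2^{T}k=\Omega\!\big(c^2\log(|Q_\eps|/\delta)/\eps^2\big)$; since $k$ must in any case be $\Omega(c^2T^2/\eps^2)$, this holds once $2^{T}T^2=\Omega\!\big(\log(|Q_\eps|/\delta)\big)$, i.e.\ for $T$ only doubly-logarithmic in $|Q_\eps|/\delta$ --- which is where the $\log\log$ (and, through the cube $Tk=\Theta(c^2T^3/\eps^2)$, the $\log^3\log$) comes from. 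A union bound over $Q_\eps$, which by the net property extends the bound to all of $\Q$, finishes the estimate.

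The main obstacle, and the real departure from \cite{DBLP:conf/focs/KarninLL16}, is that the discrepancy-optimal $\sigma$ need not split a buffer into two equal halves, so a compaction shrinks the support only in expectation; hence the number of items reaching each level --- and therefore the relation $n_\ell\approx n/2^{\ell-1}$ used to evaluate the Azuma variance, together with the number of active levels --- is random rather than fixed, unlike in the quantile case. I would handle this with an auxiliary Chernoff bound: every compactor below the top block performs many compactions, each forwarding a $\tfrac12$-fraction of its contents in expectation, so with probability $1-\delta/2$ every level forwards at most (say) twice its expected count; on this event the number of active levels is $O(\log k)$ and all the count and variance estimates hold up to constants, and the event is folded into the overall failure probability. (Forcing $\sigma$ to be balanced is an alternative but can inflate $D_{k_\ell}$, so carrying the randomness plus this Chernoff step is cleaner.) With this bookkeeping in hand, substituting the calibrated $T$ and $k$ into $O(Tk)$ gives the claimed complexity; the $D_m=O(c/m)$ case is entirely analogous, with the $\sqrt{k_\ell}$ factor absent from the per-compaction increment, which is what turns the $\eps^{-2}$ here into the $\eps^{-1}$ of Theorem~\ref{thm:streaming12}.
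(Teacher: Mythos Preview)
Your proposal is correct, and the calculations (the top-$T$ triangle-inequality bound, the cancellation of $k_\ell$ in the squared-increment sum, and the calibration of $T$ via $2^T T^2=\Omega(\log(|Q_\eps|/\delta))$) all check out. But the route you take is genuinely different from the paper's.

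The paper's proof of Theorem~\ref{thm:streaming22} is considerably simpler than what you outline. It does \emph{not} use a geometric capacity schedule or randomized discrepancy-compactions at the lower levels. Instead, every compactor has the same buffer size $m$; once a compactor has processed more than $\tilde n=O(\log(1/\delta)/\eps^2)$ items it switches to buffer size~$2$ and simply outputs one of each pair uniformly at random---pure sampling, with no discrepancy computation at all. The point is that in the $D_m=c/\sqrt m$ regime one already needs $m=\Omega(c^2/\eps^2)\ge 1/\eps^2$, so the bottom levels see enough items that naive sampling alone keeps their error at $O(\eps n)$ by a direct Chernoff bound. Because the top compactors output the \emph{smaller} half deterministically and the bottom compactors output exactly one of every two items, all output-stream lengths are deterministic; the ``random support size'' obstacle you identify as the main departure from \cite{DBLP:conf/focs/KarninLL16} never arises in the paper's argument for this theorem. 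The geometric-decay machinery and the auxiliary Chernoff bound on forwarded counts are reserved in the paper for Theorem~\ref{thm:streaming12}, precisely because there $m=c/\eps\ll 1/\eps^2$ and naive sampling at the bottom would be too coarse.

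What each approach buys: the paper's proof is shorter and sidesteps the bookkeeping you flag; your approach is more uniform---it is essentially the Theorem~\ref{thm:streaming12} argument run with the $\sqrt{k_\ell}$ increment bound in place of the constant one, so a single template covers both discrepancy rates. Your observation that the $k_\ell$ factors cancel in the variance sum (so shrinking lower capacities is free) is exactly why the more elaborate scheme still lands on the same $\log^3\log$ exponent.
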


The set $Q_\eps$ is an $\eps$-net for $\Q$. 
It is a finite subset of $\Q$ such that for every $q \in \Q$ there exist some $\tilde{q} \in Q_\eps$ for which $\sup_{x \in \X} |f(q,x) - f(\tilde{q},x)| < \eps$. 
We note that the size $|Q_\eps|$ is often exponential in the problem parameters. 
Nevertheless, our dependence on the failure probability is \emph{doubly} logarithmic. 
This means the dependence on the problem parameters is still only polylogarithmic.
The above improves on the well-known merge-and-reduce tree construction by \cite{BENTLEY1980301}.
Moreover, it is likely that a uniform $\eps$-net for $\Q$ is not required for the sake of minimization (ERM on the final sketch).
See literature on weak coresets (e.g. \cite{Feldman:2007:PKC:1247069.1247072}) and concentration results based on doubling dimensions in classification/query space \cite{BSHOUTY2009323}. The refinement of the above results is left for future work.
 
\section{Class Discrepancy of Analytic Functions of Dot Products} \label{sec:analytic}

Now that we proved the usefulness of low class discrepancy, we move to upper bound it for common family functions. We provide a coreset suitable for analytical functions of the inner product $\ip{q,x}$ or squared Euclidean distance $\|q-x\|^2$. 
The idea is to find a set of signs that simultaneously balance $\ip{q,x}^k$ for all powers $k$ and unit vectors $q$.\footnote{We Assume that $\|x\|,\|q\| \leq 1$ for ease of presentation. As above our results extend to generic bounds on the radius of $q$}
By controlling all powers of $\ip{q,x}$ we control any sum of these powers. It follows that this coreset can be used to control, for example, the logistic loss function $L(q,x) = \log(1+\exp(\ip{q,x}))$, the gaussian Kernel $K(q,x) = \exp(-\lambda \|q-x\|^2)$, or the sigmoid activation loss $1/(1 + \exp(\ip{q,x}))$.

We start with some notation and trivial properties. 
For a vector $q \in \R^d$ let $q^{\otimes k}$ represent the $k$-dimensional tensor obtained from the outer product of $q$ with itself $k$ times. For a $k$ dimensional tensor with $d^k$ entries $X$ we consider the measure
$\|X\|_{T_k} = \max_{q \in \R^d, \|q\|=1} \left| \langle X, q^{\otimes k}\rangle \right|$.
\begin{fact}
$\|X\|_{T_k}$ is a norm
\end{fact}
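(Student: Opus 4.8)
The plan is to verify the defining properties of a norm for the functional $X \mapsto \|X\|_{T_k}$: that it is well defined and finite, nonnegative, absolutely homogeneous, subadditive, and point-separating. First I would note that for a fixed tensor $X$ the map $q \mapsto \langle X, q^{\otimes k}\rangle$ is a homogeneous degree-$k$ polynomial in $q$, hence continuous, and that the Euclidean unit sphere in $\R^d$ is compact; so the maximum is attained and finite, and it is obviously nonnegative as a maximum of absolute values.

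Homogeneity and subadditivity I would get directly from linearity of the pairing $\langle \cdot\,, q^{\otimes k}\rangle$ in its tensor argument. For a scalar $\alpha$ and a unit vector $q$, $\langle \alpha X, q^{\otimes k}\rangle = \alpha\langle X, q^{\otimes k}\rangle$, so taking $|\cdot|$ and maximizing over $q$ gives $\|\alpha X\|_{T_k} = |\alpha|\,\|X\|_{T_k}$. Similarly $|\langle X+Y, q^{\otimes k}\rangle| \le |\langle X, q^{\otimes k}\rangle| + |\langle Y, q^{\otimes k}\rangle| \le \|X\|_{T_k} + \|Y\|_{T_k}$ holds for every unit $q$, and maximizing the left-hand side over $q$ yields the triangle inequality.

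The one step I expect to be the real obstacle is point-separation: showing $\|X\|_{T_k} = 0 \Rightarrow X = 0$, the converse being trivial. Here $\|X\|_{T_k} = 0$ says the polynomial $p(q) = \langle X, q^{\otimes k}\rangle$ vanishes on the sphere, hence identically on $\R^d$ by homogeneity. This alone does not pin down a completely arbitrary tensor --- any antisymmetric component is invisible to $p$ --- so the claim should be read on the subspace of symmetric tensors, which is exactly the setting used below (every $X$ that arises is a signed combination $\sum_i \sigma_i\, x_i^{\otimes k}$ of symmetric rank-one tensors). On that subspace I would invoke the polarization identity: each entry $\langle X, e_{i_1}\otimes\cdots\otimes e_{i_k}\rangle$ of a symmetric tensor is a fixed linear combination of the evaluations $p\bigl(\sum_{j\in S} e_{i_j}\bigr)$ over subsets $S \subseteq [k]$, so $p \equiv 0$ forces every entry, and hence $X$ itself, to vanish. (On the full tensor space one gets instead a seminorm with $\|X\|_{T_k} = \|\mathrm{Sym}(X)\|_{T_k}$, which descends to a genuine norm on the symmetric-tensor quotient.) Assembling these observations gives the claim.
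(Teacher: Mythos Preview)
Your proof follows the same direct-verification approach as the paper: homogeneity and the triangle inequality both come from linearity of the pairing $\langle \cdot\,, q^{\otimes k}\rangle$ together with the $\max$ definition, exactly as the paper argues. Where you diverge is in point-separation. The paper simply asserts that for $X \neq 0$ the polynomial $q \mapsto \langle X, q^{\otimes k}\rangle$ is nonzero; as you correctly observe, this fails for a general order-$k$ tensor (any nonzero purely antisymmetric tensor gives the zero polynomial), so strictly speaking $\|\cdot\|_{T_k}$ is only a seminorm on the full tensor space $\R^{d^k}$. Your restriction to symmetric tensors and appeal to polarization is the right repair, and since every tensor that actually arises later is a signed combination $\sum_i \sigma_i\, x_i^{\otimes k}$ of symmetric rank-one tensors, the restriction costs nothing. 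In fact the downstream use of the ``norm'' only needs convexity of the sublevel set $\{\psi : \|\psi\|_{T_k} \le r\}$, for which a seminorm already suffices, so the paper's conclusions are unaffected even though your caveat is well taken.
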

\begin{proof}
We prove the claim directly from the definition of a norm.
Notice that for any $X \neq 0$, $\ip{X, q^{\otimes k}}$ is a non-zero polynomial in $q$. It follows that there must be $q$ for which its value is non-zero, meaning that $\|X\|_{T_k}=0$ iff $X=0$. For a scalar $a$, we clearly have by definition that
$\|aX\|_{T_k} = |a|\|X\|_{T_k}$.  Lastly, by the max definition we  have
$ \|X+Y\|_{T_k} =  \max_q \left| \langle X+Y, q^{\otimes k}\rangle \right| \leq 
\max_q \left| \langle X, q^{\otimes k}\rangle \right| + \max_q\left| \langle Y, q^{\otimes k}\rangle \right| = \|X\|_{T_k} + \|Y\|_{T_k}$
\end{proof}

We are now ready for the lemma controlling all powers of inner products simultaneously. 

\begin{lemma}\label{uc}
For any set of vectors $x_i \in \R^d$ with $\|x_i\| \leq 1$ there exist a set of signs $\sigma_i$ such that for all $k$ simultaneously $\left\| \sum_i \sigma_i x_i^{\otimes k} \right\|_{T_k} \le O(\sqrt{d k\log^{3}{k}})$ (the $3$ power of the term $\log(k)$ can be reduced to any constant power larger than $2$). 
\end{lemma}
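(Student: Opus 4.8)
The plan is to derive the lemma from Banaszczyk's theorem on signed series, applied not to the tensors $x_i^{\otimes k}$ separately but to a single weighted concatenation of all of them at once. Fix the (finite) set $x_1,\ldots,x_n$ and a truncation level $L$, and form $v_i = (c_1 x_i^{\otimes 1},\, c_2 x_i^{\otimes 2},\ldots, c_L x_i^{\otimes L}) \in \R^{d+d^2+\cdots+d^L}$, where the scalars $c_k>0$ are chosen with $\sum_{k=1}^L c_k^2\le 1$; since $\|x_i\|\le 1$ this immediately forces $\|v_i\|_2^2 = \sum_k c_k^2\|x_i\|^{2k}\le\sum_k c_k^2\le 1$. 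Concretely I would take $c_k \asymp 1/(\sqrt k\log k)$ for $k\ge 2$ and $c_1$ a small constant, which is square-summable because $\sum_k 1/(k\log^2 k)<\infty$; this choice is exactly where the exponent $3$ enters, as explained at the end.

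The target region is the convex symmetric body $K = \{(Z_1,\ldots,Z_L): \|Z_k\|_{T_k}\le r_k \text{ for all } k\}$, where $r_k = \E\|G_k\|_{T_k} + t_k$, $G_k$ is a standard Gaussian vector in $\R^{d^k}$, and $t_k = \Theta(\sqrt{\log(k+1)})$ is picked so that $\sum_k e^{-t_k^2/2}<1/2$. I would show $\gamma(K)\ge 1/2$ as follows: a standard Gaussian on $\R^{d+\cdots+d^L}$ splits into independent standard Gaussians $G_k$ on the blocks $\R^{d^k}$; since $\|q^{\otimes k}\|_2=1$, the map $Z\mapsto\|Z\|_{T_k}$ is $1$-Lipschitz in $\ell_2$, so Gaussian concentration gives $\Pr[\|G_k\|_{T_k}>r_k]\le e^{-t_k^2/2}$; multiplying over the independent blocks, $\gamma(K) = \prod_k\Pr[\|G_k\|_{T_k}\le r_k]\ge\prod_k(1-e^{-t_k^2/2})>1/2$.

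The crux is bounding $\E\|G_k\|_{T_k}$. Here $\|G_k\|_{T_k}$ is the supremum over $S^{d-1}$ of the Gaussian process $q\mapsto\langle G_k, q^{\otimes k}\rangle$, whose canonical metric is $\rho(q,q') = \|q^{\otimes k}-q'^{\otimes k}\|_2$. The geometric fact I would use is $\|q^{\otimes k}-q'^{\otimes k}\|_2^2 = 2\big(1-\langle q,q'\rangle^k\big)\le k\|q-q'\|_2^2$ for unit $q,q'$ (so the tensor-power map $q\mapsto q^{\otimes k}$ is $\sqrt k$-Lipschitz on the sphere, with image of $\ell_2$-diameter at most $2$). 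Feeding the covering bound $N(S^{d-1},\rho,\eps)\le(1+2\sqrt k/\eps)^d$ into Dudley's entropy integral $\int_0^2\sqrt{\log N(S^{d-1},\rho,\eps)}\,d\eps$ yields $\E\|G_k\|_{T_k} = O\big(\sqrt{d\log(k+1)}\,\big)$, hence $r_k = O\big(\sqrt{d\log(k+1)}\,\big)$.

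Now Banaszczyk's theorem applies: $\|v_i\|_2\le 1$ and $\gamma(K)\ge 1/2$, so there are signs $\sigma_i\in\{-1,1\}$ with $\sum_i\sigma_i v_i\in c_0 K$ for a universal constant $c_0$. Reading off the $k$-th block, $c_k\big\|\sum_i\sigma_i x_i^{\otimes k}\big\|_{T_k}\le c_0 r_k$, i.e. $\big\|\sum_i\sigma_i x_i^{\otimes k}\big\|_{T_k}\le c_0 r_k/c_k = O(\sqrt{d\log k})\cdot O(\sqrt k\log k) = O\big(\sqrt{dk\log^3 k}\,\big)$ for every $k\le L$. Letting $L\to\infty$ and using that $\{-1,1\}^n$ is finite, some single sign vector is produced at infinitely many truncation levels and hence works for all $k$ (alternatively, the bound is trivially true once $\sqrt{dk\log^3 k}$ exceeds $n$). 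For the parenthetical remark, replacing $3$ by $\alpha$ forces $c_k\asymp 1/(\sqrt k\log^{(\alpha-1)/2}k)$ and $\sum_k c_k^2\asymp\sum_k 1/(k\log^{\alpha-1}k)$, which converges precisely when $\alpha>2$. The one genuinely delicate point is the third step: a single $\eps$-net with a union bound is far too lossy, since its off-net error would be governed by the operator scale $\|G_k\|_2\approx d^{k/2}$ — so true chaining together with the $\sqrt k$-Lipschitz geometry of the tensor-power map is essential, as any polynomial-in-$k$ bound on $\E\|G_k\|_{T_k}$ would destroy the square-summability of the $c_k$ and with it the uniform-over-$k$ guarantee.
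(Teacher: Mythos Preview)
Your proof is correct and follows essentially the same plan as the paper's: both apply Banaszczyk to the concatenated vectors $\psi(x)=\big(x^{\otimes k}/\sqrt{k\log^{2}k}\big)_{k}$ and take the convex body to be a product of $T_k$-balls of radius $\asymp\sqrt{d\log k}$, the square-summability of $1/(k\log^{2}k)$ being exactly what forces the exponent $>2$. The only real difference is how the tail $\Pr[\|G_k\|_{T_k}\gtrsim\sqrt{d\log k}]\le O(1/k^{2})$ is obtained---you do it via Dudley chaining with the $\sqrt{k}$-Lipschitz bound on $q\mapsto q^{\otimes k}$, while the paper invokes an $\eps$-net argument from Tomioka--Suzuki (which, contrary to your closing remark, can in fact be made to work: the off-net error self-bounds via the asymmetric injective norm of $G_k$, not via $\|G_k\|_2$).
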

\begin{proof}
The proof will use Banaszczyk's theorem \cite{Banaszczyk}. 
Let $\mathcal K$ be a convex body in Euclidean space with Gaussian measure at least 1/2 ($\Pr[g \in \mathcal K] \ge 1/2$ when $g$ is i.i.d.\ Gaussian).
Let $x_1,\ldots,x_n$ be vectors with $\|x_i\| \leq 1$. 
Then, there exist signs $\sigma$ such that $\sum \sigma_i x_i \in C \mathcal K$ for some constant $C$.

To use Banaszczyk's theorem we begin with defining our convex body.
Define the norm $\|\psi\|_T$ of a vector $\psi$ as follows. Look at the first $d$ coordinates of $\psi$ as a vector $\psi_1$, the next $d^2$ coordinates of $\psi$ as a matrix $\psi_2$ the next $d^3$ coordinates as a three tensor $\psi_3$ etc.
We define $\|\psi\|_T = \max_k \|\psi_k\|_{T_k} /\sqrt{\log(k)}$. 
Here, $\|\cdot\|_{T_k}$ is the special spectral norm defined in the beginning of the section.
The maximum over norms of subvectors is clearly a norm in itself, meaning that $\|\cdot \|_T$ is indeed a norm. It follows that  the set $\mathcal K  = \{\psi \; | \; \|\psi\|_T \le c\sqrt{d}\}$ is convex. 

We now need to show that the Gaussian measure of $\mathcal K$ is at least $1/2$. 
That is, with probability at least $1/2$ a vector of random Gaussian entrees $g$ belongs to $\mathcal K$.
Consider a random i.i.d.\ Gaussian Tensor $g_k \in \R^{d^k}$. 

A trivial modification of Theorem 1 from \cite{tomioka2014spectral} shows that $\Pr[\|g_k\|_{T_k} \ge c\sqrt{d\log(k)}] \le 1/10k^2$ for some constant $c$. The only change needed in the proof is the size of the epsilon net which changes from $(2\log(3/2)/k)^{kd}$ for \cite{tomioka2014spectral} to $(2\log(3/2)/k)^d$. The reason we require a net over a smaller space is due to us bounding the inner product with a rank one tensor rather than rank $k$. Union bounding on all values of $k$ we get $\sum_k 1/10k^2 \le 1/2$ which shows $g = [g_1, \operatorname{flat}(g_2), \operatorname{flat}(g_3), \ldots]$ belongs to $\mathcal K$ with probability at least $1/2$, where $\operatorname{flat}(g_k)$ is the flattening of the tensor into a one dimensional vector. 
We now define a mapping $\psi(x)$ of $x\in \R^d$ to a high dimensional space. 
$$\psi(x) = \left[x, \frac{\operatorname{flat}(x^{\otimes 2})}{\sqrt{2\log^2(2)}}, \frac{\operatorname{flat}(x^{\otimes 3})}{\sqrt{3\log^2(3)}}, \ldots,\frac{\operatorname{flat}(x^{\otimes k})}{\sqrt{k\log^2(k)}},\ldots \right]$$

\noindent Note that for $\|x\| \le 1$ we have $\|\psi(x)\|_2 = (\sum_k  1/k\log^2(k))^{1/2} = O(1)$.

We are now ready to apply Banaszczyk's theorem. 
There exist signs $\sigma_i$ such that $\psi  = \sum_i \sigma_i \psi(x_i) \in C \mathcal K$, meaning $\|\psi\|_T \leq C$.
Since $\psi_k = \sum_i \sigma_i x_i^{\otimes k}/\sqrt{k \log^2{k}}$ we get that 
$$\max_k \frac{\|\sum_i \sigma_i  x_i^{\otimes k}\|_{T_k}}{\sqrt{k \log^{3}(k)}} \le O\left( \sqrt{d} \right)$$
This concludes the proof of the statement.
\end{proof}

\begin{lemma} \label{lem:komlos anl}
Let $f$ be a function of the inner product $f(x,q) = f(\langle x,q\rangle)$ and let $f = \sum_k \alpha_k \langle x,q\rangle^k$ be its Taylor expansion. 
The class discrepancy of $f$ indexed by $\|q\| \leq 1$ is bounded by
\[
D_m = \min_\sigma \sum_i \sigma_i f(x_i,q) =O\left( \sqrt{d} \sum_k  |\alpha_k|\sqrt{k\log^3(k)}\right)
\]
For general $\|q\| \leq R$ we get
\[
D_m = \min_\sigma \sum_i \sigma_i f(x_i,q) =O\left( \sqrt{d} \sum_k  |\alpha_k| R^k \sqrt{ k\log^3(k)}\right)
\]
\end{lemma}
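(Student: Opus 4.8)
The plan is to reduce Lemma~\ref{lem:komlos anl} to Lemma~\ref{uc} by linearity. First I would fix the set $\{x_1,\ldots,x_m\} \subset \R^d$ with $\|x_i\| \le 1$ and invoke Lemma~\ref{uc} to obtain a \emph{single} sign vector $\sigma \in \{-1,1\}^m$ such that the bound $\left\|\sum_i \sigma_i x_i^{\otimes k}\right\|_{T_k} \le O(\sqrt{dk\log^3 k})$ holds for \emph{all} $k$ simultaneously. The key point is that the same $\sigma$ works for every power at once, which is exactly what is needed to control a Taylor series.

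Next I would expand the signed sum using the Taylor expansion $f(\langle x,q\rangle) = \sum_k \alpha_k \langle x,q\rangle^k$ and the tensor identity $\langle x,q\rangle^k = \langle x^{\otimes k}, q^{\otimes k}\rangle$. This gives, for any $q$ with $\|q\| \le 1$,
\[
\left|\sum_i \sigma_i f(x_i,q)\right| = \left|\sum_i \sigma_i \sum_k \alpha_k \langle x_i,q\rangle^k\right| = \left|\sum_k \alpha_k \left\langle \sum_i \sigma_i x_i^{\otimes k},\, q^{\otimes k}\right\rangle\right|.
\]
Applying the triangle inequality over $k$, then the definition of $\|\cdot\|_{T_k}$ (which upper bounds $|\langle X, q^{\otimes k}\rangle|$ by $\|X\|_{T_k}$ whenever $\|q\|=1$, and hence whenever $\|q\|\le 1$ since the maximizing $q$ has unit norm), and finally the bound from Lemma~\ref{uc}, yields
\[
\left|\sum_i \sigma_i f(x_i,q)\right| \le \sum_k |\alpha_k| \left\|\sum_i \sigma_i x_i^{\otimes k}\right\|_{T_k} \le O\!\left(\sqrt{d}\sum_k |\alpha_k|\sqrt{k\log^3 k}\right).
\]
Since this holds for every admissible $q$, it bounds $\max_q \left|\sum_i \sigma_i f(x_i,q)\right|$, and since $\sigma$ is a specific sign choice it upper bounds the min over $\sigma$, giving $D_m$ as claimed. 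For the general radius $\|q\| \le R$, I would write $q = R\hat q$ with $\|\hat q\|=1$, so $\langle x,q\rangle^k = R^k \langle x,\hat q\rangle^k$, and carry the factor $R^k$ through the same chain of inequalities to get the $R^k$ in each term.

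The main obstacle is not any single inequality but rather the book-keeping around convergence and normalization: one must check that the Taylor series converges absolutely on the relevant domain (which is implicit in $f$ being analytic with $\|x\|,\|q\|\le 1$, so $|\langle x,q\rangle|\le 1$) so that interchanging $\sum_i$ and $\sum_k$ is legitimate, and that the per-$k$ scaling factors $\sqrt{k\log^2 k}$ used inside Lemma~\ref{uc}'s embedding $\psi$ have already been absorbed — i.e., the $\sqrt{k\log^3 k}$ appearing here is exactly what Lemma~\ref{uc} delivers and need not be re-derived. A minor subtlety worth a sentence is the $k=1$ term, where $\log k = 0$; this is handled by starting the relevant sums at $k \ge 2$ or by a harmless constant shift in the $\log$, consistent with the conventions already used in the proof of Lemma~\ref{uc}. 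Everything else is a direct substitution.
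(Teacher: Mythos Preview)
Your proposal is correct and follows essentially the same route as the paper: fix the signs from Lemma~\ref{uc}, expand $f$ via its Taylor series, use the identity $\langle x,q\rangle^k=\langle x^{\otimes k},q^{\otimes k}\rangle$, apply the triangle inequality together with the $\|\cdot\|_{T_k}$ bound, and pick up the factor $\|q\|^k$ (equivalently $R^k$) for the general-radius statement. The extra remarks you add about absolute convergence and the $k=1$ term are harmless refinements; the paper's proof simply proceeds without them.
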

\begin{proof}
The proof follows from combining the above.
$$
\sum_i \sigma_i f(x_i,q) = \sum_k \alpha_k \sum_i \sigma_i \ip{ x_i,q}^k =  \sum_k \alpha_k  \ip{  \sum_i \sigma_i x_i^{\otimes k},q^{\otimes k}} \le $$
$$\sum_k |\alpha_k| \cdot \left\| \sum_i \sigma_i x_i^{\otimes k}\right\|_{T_k} \cdot \|q\|^k
$$
By Lemma \ref{uc} we can find signs $\sigma$ such that 
$\left\| \sum_i \sigma_i x_i^{\otimes k}\right\|_{T_k} \le c\sqrt{d k \log^3(k)}$. Substituting into the above, the lemma follows.
\end{proof}

\begin{theorem}\label{analitic1}
Let $f:\R\rightarrow\R$ be analytic. There exist a radius $R$ such that functions $f = f(\ip{q,x})$, indexed by $\|q\| \leq R$, have class discrepancy $O(\sqrt{d}/m)$. 
\end{theorem}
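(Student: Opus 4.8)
The plan is to reduce the statement to a convergence question about the Taylor coefficients of $f$ and then invoke Lemma~\ref{lem:komlos anl}. Write the Taylor expansion $f(t) = \sum_{k\ge 0}\alpha_k t^k$. Since $f$ is analytic, this series has a radius of convergence $\rho>0$ (possibly $\rho=\infty$), and by the Cauchy--Hadamard formula $\limsup_k |\alpha_k|^{1/k} = 1/\rho$. I would then fix any $R$ with $0<R<\rho$ --- for instance $R=\rho/2$ when $\rho$ is finite, or an arbitrary finite $R$ otherwise --- so that in particular $|\langle x_i,q\rangle|\le \|x_i\|\,\|q\|\le R<\rho$ whenever $\|x_i\|\le 1$ and $\|q\|\le R$, and the Taylor series of $f$ represents $f$ (absolutely) on the relevant interval.

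The core estimate is that the quantity $C := \sum_k |\alpha_k| R^k\sqrt{k\log^3 k}$ appearing in Lemma~\ref{lem:komlos anl} is a finite constant depending only on $f$ and the choice of $R$, not on $d$, $m$, or the data. To see this, pick an intermediate radius $R'$ with $R<R'<\rho$; since $\limsup_k |\alpha_k|^{1/k} = 1/\rho < 1/R'$, there is some $K$ with $|\alpha_k|\le (1/R')^k$ for all $k\ge K$, hence $|\alpha_k| R^k\le (R/R')^k$ with ratio $R/R'<1$. A geometric tail dominates the polynomial factor $\sqrt{k\log^3 k}$, so $\sum_{k\ge K}(R/R')^k\sqrt{k\log^3 k}<\infty$, and the finitely many terms below $K$ are harmless; thus $C=O(1)$.

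With $C=O(1)$ in hand, Lemma~\ref{lem:komlos anl} produces signs $\sigma$ for which $\left|\frac1m\sum_i \sigma_i f(x_i,q)\right| = O(\sqrt d\,C/m)=O(\sqrt d/m)$ simultaneously for every $\|q\|\le R$: the same sign vector works for all $q$ because it comes from Lemma~\ref{uc} (which controls every tensor power $\sum_i\sigma_i x_i^{\otimes k}$ at once), and the ball $\|q\|\le R$ enters only through the monotone factor $\|q\|^k\le R^k$. This is exactly the assertion $D_m(f)=O(\sqrt d/m)$ for $f$ indexed by $\|q\|\le R$. If one only assumes $\|x_i\|\le\Gamma$ for some $\Gamma$, the same argument goes through after replacing $R$ by $R/\Gamma$, or equivalently absorbing $\Gamma$ into the coefficients.

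I do not expect a serious obstacle here: the only point requiring care is that ``analytic'' be read as ``positive radius of convergence,'' so that there is genuine room to choose $R$ strictly inside the disk of convergence; once that is pinned down, the convergence of $C$ is routine since geometric decay beats the $\sqrt{k\log^3 k}$ growth. The mild subtlety worth stating explicitly in the write-up is why a single sign vector handles the $\max_q$ in the definition of class discrepancy, which is precisely the uniform-over-all-powers guarantee of Lemma~\ref{uc}.
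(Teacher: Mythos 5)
Your argument is correct and is essentially the paper's own: the paper bounds the Taylor coefficients by $|\alpha_k|\le C^{k+1}$ (via the standard Cauchy estimate $|d^kf/dz^k|\le C^{k+1}k!$), picks $R<1/C$ so that $\sum_k|\alpha_k|R^k\sqrt{k\log^3 k}$ is dominated by a convergent geometric-times-polynomial series, and then invokes Lemma~\ref{lem:komlos anl}. Your Cauchy--Hadamard phrasing with an intermediate radius $R'$ is just a slightly more explicit way of getting the same geometric decay, and your closing remark about a single sign vector serving all $q$ is exactly the role of Lemma~\ref{uc}.
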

\begin{proof}
Recall that for analytic functions $f$ we have $\left| \frac{d^k f}{dz^k}(z) \right|  \leq C^{k+1} k! $
for some constant $C$. Considering the taylor expansion of $f$ near zero, for $R < 1/C$ the sum
$ \sum_k  |\alpha_k| R^k \sqrt{ k\log^3(k)} \leq C \sum_k  (CR)^k \sqrt{ k\log^3(k)}$
corresponding to Lemma~\ref{lem:komlos anl} converges to a constant. The result follows.
\end{proof}

The following two corollaries apply to the Logistic function and sigmoid activation loss function. They are easy to obtain by noticing the coefficients of the functions' Taylor expansion.

\begin{corollary}\label{cdlogistic}
The class discrepancy of the Logistic function $f(\ip{q,x}) = \log(1+\exp(\ip{q,x}))$ in dimension $d$, for $\|q\| \leq 1$ is $O(\sqrt{d}/m)$.
\end{corollary}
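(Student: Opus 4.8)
The plan is to read this off directly from Theorem~\ref{analitic1} (equivalently from Lemma~\ref{lem:komlos anl}): since the logistic function $g(z)=\log(1+\exp(z))$ is analytic, the only thing left to verify is that the radius $R$ guaranteed by Theorem~\ref{analitic1} may be taken to be at least $1$, i.e.\ that the series $\sum_k |\alpha_k|\sqrt{k\log^3 k}$ formed from the Taylor coefficients $\alpha_k = g^{(k)}(0)/k!$ of $g$ at the origin converges. Here we use the standing assumption $\|x\|,\|q\|\le 1$, which forces $|\ip{x,q}|\le 1$, so only the behaviour of $g$ on $[-1,1]$ is relevant and we may apply Lemma~\ref{lem:komlos anl} with $R=1$.

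First I would locate the singularities of $g$. Its derivative $g'(z)=1/(1+e^{-z})$ is meromorphic with poles exactly where $e^{-z}=-1$, that is at $z=i\pi(2j+1)$ for $j\in\mathbb{Z}$; correspondingly $1+e^{z}$ vanishes only on $|z|=\pi$, so $g$ has a holomorphic branch on the open disk $|z|<\pi$ and its nearest singularity to the origin lies at distance $\pi>1$. Fixing any $\rho$ with $1<\rho<\pi$ (say $\rho=2$), Cauchy's estimate on the circle $|z|=\rho$ yields $|\alpha_k|\le M_\rho\,\rho^{-k}$, where $M_\rho=\max_{|z|=\rho}|g(z)|<\infty$.

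Substituting this geometric decay into Lemma~\ref{lem:komlos anl} with $R=1$ gives
\[
\sum_k |\alpha_k|\,\sqrt{k\log^3 k}\ \le\ M_\rho\sum_k \rho^{-k}\sqrt{k\log^3 k}\ =\ O(1),
\]
since a geometric term with ratio $\rho^{-1}<1$ dominates the polylogarithmic factor $\sqrt{k\log^3 k}$. Lemma~\ref{lem:komlos anl} then supplies signs $\sigma$ with $\bigl|\tfrac1m\sum_i\sigma_i f(x_i,q)\bigr|\le \tfrac1m\,O(\sqrt d)$ simultaneously over all $\|q\|\le 1$, i.e.\ $D_m(f)=O(\sqrt d/m)$. (Alternatively, one checks that the constant $C$ with $|g^{(k)}(0)|\le C^{k+1}k!$ can be taken to satisfy $C\le 1/\rho<1$, so the condition $R<1/C$ in the proof of Theorem~\ref{analitic1} is met by $R=1$, and one invokes that theorem verbatim.)

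I do not anticipate a genuine obstacle. The only point requiring care is the quantitative claim that the disk of convergence of $g$ strictly contains the interval $[-1,1]$ of possible inner-product values --- which is exactly the observation that the nearest pole of the sigmoid sits at distance $\pi$ --- after which both the convergence of $\sum_k|\alpha_k|\sqrt{k\log^3 k}$ and its substitution into Lemma~\ref{lem:komlos anl} are routine.
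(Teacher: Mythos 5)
Your proposal is correct and follows the same route the paper intends: apply Lemma~\ref{lem:komlos anl} (via Theorem~\ref{analitic1}) after checking that the Taylor coefficients of $\log(1+e^z)$ at the origin decay geometrically fast enough for the radius $R=1$ to be admissible. The paper dispatches the corollary with the one-line remark that it follows ``by noticing the coefficients of the functions' Taylor expansion''; your observation that the nearest singularity of $\log(1+e^z)$ lies at distance $\pi>1$, hence $|\alpha_k|=O(\rho^{-k})$ for any $1<\rho<\pi$ by Cauchy's estimate, is exactly the content that remark glosses over.
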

\begin{corollary}\label{cdsigmoid}
The class discrepancy of the sigmoid activation loss function $f(\ip{q,x}) = 1/(1+\exp(\ip{q,x}))$ in dimension $d$, for $\|q\| \leq 1$ is $O(\sqrt{d}/m)$.
\end{corollary}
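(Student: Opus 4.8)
The plan is to go through Lemma~\ref{lem:komlos anl} (rather than quoting Theorem~\ref{analitic1} as a black box), since the corollary asserts the bound for all $\|q\|\le 1$ and we must therefore certify that the admissible radius is at least $1$. Write $s(z)=1/(1+e^z)$ and let $s(z)=\sum_k\alpha_k z^k$ be its Taylor expansion about the origin. By Lemma~\ref{lem:komlos anl} with $R=1$, it suffices to show that $\sum_k|\alpha_k|\sqrt{k\log^3 k}=O(1)$; the $O(\sqrt d/m)$ bound on $D_m$ then follows immediately (the extra $1/m$ relative to the displayed expression in Lemma~\ref{lem:komlos anl} comes from the normalization in Definition~\ref{cd2}).

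The one substantive point is to pin down the decay of $\alpha_k$. Observe that $1+e^z$ vanishes exactly at the points $z=i\pi,\,3i\pi,\,5i\pi,\dots$ and their complex conjugates (equivalently, $s(z)=\tfrac12-\tfrac12\tanh(z/2)$), so $s$ is holomorphic on the open disk $|z|<\pi$ and its nearest singularities to the origin lie at distance $\pi$; hence the Taylor series has radius of convergence exactly $\pi>1$. Fixing any $\rho$ with $1<\rho<\pi$, say $\rho=2$, and setting $M=\max_{|z|=\rho}|s(z)|<\infty$, Cauchy's estimate gives $|\alpha_k|\le M\rho^{-k}$. Therefore
\[
\sum_k|\alpha_k|\,R^k\sqrt{k\log^3 k}\ \le\ M\sum_k \rho^{-k}\sqrt{k\log^3 k}\ =\ O(1),
\]
since a geometric factor $\rho^{-k}$ with $\rho>1$ dominates any poly-logarithmic-in-$k$ growth. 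Plugging this into Lemma~\ref{uc}/Lemma~\ref{lem:komlos anl} produces signs $\sigma$ with $\bigl|\sum_i\sigma_i s(\ip{x_i,q})\bigr|=O(\sqrt d)$ uniformly over $\|q\|\le 1$, which after dividing by $m$ is the claim.

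The sigmoid activation loss case of Corollary~\ref{cdsigmoid} is thus essentially a routine instantiation; there is no real obstacle, only a minor point of care: the generic argument in Theorem~\ref{analitic1} uses the crude bound $|d^kf/dz^k|\le C^{k+1}k!$, which only guarantees an admissible radius $R<1/C$, so one must instead locate the actual singularities of $s$ (at $\pm i\pi$) to certify that $R=1$ is admissible. The same reasoning, applied to the Taylor coefficients of $\log(1+e^z)$ (analytic on $|z|<\pi$ as well), gives Corollary~\ref{cdlogistic}.
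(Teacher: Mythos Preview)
Your proposal is correct and follows the paper's (very brief) approach: the paper simply says the corollaries ``are easy to obtain by noticing the coefficients of the functions' Taylor expansion,'' and you make this precise by locating the singularities of $s(z)=1/(1+e^z)$ at $z=(2k+1)i\pi$, applying Cauchy's estimate on a circle of radius $\rho\in(1,\pi)$, and plugging the resulting geometric decay into Lemma~\ref{lem:komlos anl}. Your added remark that one cannot just cite Theorem~\ref{analitic1} verbatim (it only asserts \emph{some} admissible $R$, not $R=1$) is a valid point of care that the paper glosses over.
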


\begin{corollary}
The class discrepancy of the covariance function $f(\ip{q,x}) = \ip{q,x}^2$ in dimension $d$, for $\|q\| \leq 1$ is $O(\sqrt{d}/m)$. This gives coresets for matrix column subset selection such that $\|XX^T - \tilde X \tilde X^T\| \le \eps n$ where $\tilde X$ contains only $O(\sqrt{d}/\eps)$ rescaled columns of the matrix $X$.
\end{corollary}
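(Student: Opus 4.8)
The plan is to treat $f(z)=z^2$ as a (trivial) special case of the analytic functions of Theorem~\ref{analitic1}, deduce the discrepancy bound, pass to a coreset via Fact~\ref{fct:eps1}, and then reinterpret that coreset as a reweighted column subset. First I would observe that $f(z)=z^2$ is entire and its Taylor expansion around the origin consists of the single term $\alpha_2\ip{q,x}^2$ with $\alpha_2=1$ and $\alpha_k=0$ for $k\neq 2$. Plugging this into Lemma~\ref{lem:komlos anl} with $\|q\|\le 1$ (and recalling $\|x_i\|\le 1$, as required by Lemma~\ref{uc}), the series $\sum_k |\alpha_k|\sqrt{k\log^3 k}$ collapses to the single constant $\sqrt{2\log^3 2}=O(1)$, so $D_m=O(\sqrt d/m)$. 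Equivalently this is immediate from Theorem~\ref{analitic1}, since for a polynomial the Taylor series is finite and one may take $R=1$.

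Next I would invoke Fact~\ref{fct:eps1} with $c=\sqrt d$: the halving trick underlying it applies because $\ip{q,x}^2\in[0,1]$ whenever $\|q\|,\|x\|\le 1$, so $f$ has coreset complexity $O(\sqrt d/\eps)$. Concretely, for columns $x_1,\dots,x_n\in\R^d$ there is a set $S$ with $|S|=O(\sqrt d/\eps)$ and nonnegative weights $w_i$ (in fact powers of two) such that $\left|\sum_{i\in S} w_i\ip{q,x_i}^2-\sum_{i=1}^n\ip{q,x_i}^2\right|\le \eps n$ for every unit vector $q$. To translate this into the matrix statement, let $X$ be the $d\times n$ matrix with columns $x_i$, so $XX^T=\sum_{i=1}^n x_ix_i^T$ and $q^TXX^Tq=\sum_i\ip{q,x_i}^2$, and let $\tilde X$ be the $d\times|S|$ matrix whose columns are the rescaled vectors $\sqrt{w_i}\,x_i$ for $i\in S$, so $\tilde X\tilde X^T=\sum_{i\in S}w_ix_ix_i^T$ and $q^T\tilde X\tilde X^Tq=\sum_{i\in S}w_i\ip{q,x_i}^2$. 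The coreset guarantee then reads $\left|q^T(XX^T-\tilde X\tilde X^T)q\right|\le\eps n$ for all unit $q$, which, since $XX^T-\tilde X\tilde X^T$ is symmetric, is exactly $\|XX^T-\tilde X\tilde X^T\|\le\eps n$, with $\tilde X$ containing only $O(\sqrt d/\eps)$ rescaled columns of $X$.

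There is essentially no serious obstacle here; the two points requiring care are checking the hypotheses of the imported results — that $z^2$ is analytic with a finite Taylor expansion, and that its range on the unit ball is $[0,1]$ so that the halving trick of Fact~\ref{fct:eps1} is legitimate — and the routine (but not completely automatic) step of recognizing that controlling the quadratic form $q^TMq$ over all unit $q$ is the same as bounding the spectral norm of the symmetric matrix $M=XX^T-\tilde X\tilde X^T$.
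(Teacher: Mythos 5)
Your proposal is correct and takes essentially the same route the paper intends: the corollary is presented as an immediate consequence of Theorem~\ref{analitic1} (with $f(z)=z^2$ having a single nonzero Taylor coefficient $\alpha_2=1$, so the series in Lemma~\ref{lem:komlos anl} trivially converges and one may take $R=1$), followed by Fact~\ref{fct:eps1} to convert the $O(\sqrt d/m)$ discrepancy into an $O(\sqrt d/\eps)$ coreset. Your translation of the quadratic-form guarantee into the spectral-norm bound via $q^T(XX^T-\tilde X\tilde X^T)q=\sum_i\ip{q,x_i}^2-\sum_{i\in S}w_i\ip{q,x_i}^2$ and symmetry of $XX^T-\tilde X\tilde X^T$, and your observation that the range assumption $\ip{q,x}^2\in[0,1]$ needed by the halving trick holds by Cauchy--Schwarz, are exactly the details the paper leaves implicit.
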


\begin{theorem} \label{thm:analytic2}
Let $f:\R\rightarrow\R$ be analytic. There exist a radius $R$ such that the function $f(\|x-q\|^2)$, indexed by $\|q\| \leq R$, has class discrepancy $O(\sqrt{d}/m)$. 
\end{theorem}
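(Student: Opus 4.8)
The plan is to reduce the squared-distance case to the dot-product case already handled by Theorem~\ref{analitic1}, by lifting the points $x_i$ into a higher-dimensional space in which $\|x-q\|^2$ becomes an affine — and ultimately a linear — function of a new inner product. Concretely, for $\|q\|,\|x\| \le R$ expand $\|x-q\|^2 = \|x\|^2 - 2\ip{x,q} + \|q\|^2$. The term $\|q\|^2$ is independent of $i$, so $\sum_i \sigma_i f(\|x_i-q\|^2)$ sees it only through the fixed scalar $\|q\|^2$; the term $\|x_i\|^2$ depends only on $i$. The natural move is therefore to associate with each $x_i$ the augmented vector $\hat x_i = (x_i,\ \|x_i\|^2,\ 1) \in \R^{d+2}$ and with each query the augmented vector $\hat q = (-2q,\ 1,\ \|q\|^2)$, so that $\ip{\hat x_i,\hat q} = \|x_i-q\|^2$. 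Both $\hat x_i$ and $\hat q$ have norm bounded by a constant depending on $R$, so after rescaling by that constant we are exactly in the setting of Lemma~\ref{lem:komlos anl} / Theorem~\ref{analitic1} applied to the analytic function $g(z) = f(c^2 z)$ in dimension $d+2$.

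First I would verify that $g(z) := f(cz)$ is analytic whenever $f$ is, with Taylor coefficients $\alpha_k c^k$; since $f$ analytic gives $|\alpha_k| \le C^{k+1}k!$, choosing the query-radius bound $R$ small enough (depending on $c$ and $C$) makes $\sum_k |\alpha_k| c^k R'^k \sqrt{k\log^3 k}$ converge to a constant, where $R'$ is the radius bound on the augmented queries. Then I would invoke Theorem~\ref{analitic1} in dimension $d+2$ to obtain signs $\sigma_i$ with $\bigl|\sum_i \sigma_i g(\ip{\hat x_i,\hat q})\bigr| = \bigl|\sum_i \sigma_i f(\|x_i-q\|^2)\bigr| \le O(\sqrt{d+2}/m) = O(\sqrt d/m)$ uniformly over the admissible $\hat q$, which is what the theorem claims. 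Since the same sign vector works for all $q$, and the constant absorbs the $+2$, the discrepancy bound $O(\sqrt d/m)$ follows.

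The one genuine subtlety — and the step I expect to be the main obstacle — is bookkeeping the radius constraints: the hypothesis is stated as $\|q\| \le R$ on the \emph{original} query, but the augmented query $\hat q = (-2q,1,\|q\|^2)$ has norm roughly $\sqrt{4R^2 + 1 + R^4}$, and the augmented data vectors $\hat x_i$ have norm up to $\sqrt{1+1+1}$ under $\|x_i\|\le 1$; I must track the composite constant $c$ through the rescaling $\hat x_i/c$, $\hat q/c$ and confirm that the effective expansion point and radius for $g$ still lie within the region of convergence guaranteed by analyticity. This is purely a matter of choosing $R$ (and absorbing constants) correctly; no new idea is needed beyond the augmentation trick, and I would present it by first fixing $c = c(R)$ as the common norm bound, then noting $\|x-q\|^2 = c^2\langle \hat x_i/c,\ \hat q/c\rangle$ and applying Theorem~\ref{analitic1} to $z \mapsto f(c^2 z)$.
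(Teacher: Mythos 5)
Your proposal is correct and takes essentially the same approach as the paper: lift $x$ and $q$ into $\R^{d+2}$ via an augmentation that turns $\|x-q\|^2$ into an inner product (the paper uses $\tilde x = (1,\sqrt{2}x,\|x\|^2)$, $\tilde q = (\|q\|^2,-\sqrt{2}q,1)$, a cosmetic variant of yours) and then invoke Theorem~\ref{analitic1}. You are simply more explicit about the rescaling and radius bookkeeping, which the paper compresses into the remark that $\|q\|\le R$ gives $\|\tilde q\|\le R^2+1$.
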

\begin{proof}
By transforming $x$ to $\tilde{x} = (1, \sqrt{2}x, \|x\|^2)$ and $q$ to $\tilde{q} = (\|q\|^2, -\sqrt{2}q, 1)$ we get $\ip{\tilde{x},\tilde{q}} = \|q-x\|^2$. Moreover, $\|q\| \le R$ gives $\|\tilde q\| \le R^2+1$. The result follows from applying Theorem~\ref{analitic1} to $f(\ip{ \tilde q, \tilde x}) = f(\|q-x\|^2)$.
\end{proof}

\begin{corollary}
For cases where $\|q-x\| \leq 1$ for all $q,x$, the class discrepancy of the Gaussian kernel $K(q,x) = \exp(-\gamma \|x-q\|^2)$ in dimension $d$ is $O(\gamma\exp(\gamma)\sqrt{d}/m)$.
\end{corollary}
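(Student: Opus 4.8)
The plan is to read off the Gaussian kernel as a bounded analytic function of the squared distance and feed its Taylor coefficients into the quantitative bound of Lemma~\ref{lem:komlos anl}, being careful to keep the dependence on $\gamma$ explicit rather than absorbing it into the $O(\cdot)$ of Theorem~\ref{thm:analytic2}. Concretely, I would set $g(z) = e^{-\gamma z}$, so that $K(q,x) = g(\|x-q\|^2)$, and use the lift of Theorem~\ref{thm:analytic2}, namely $\tilde x = (1,\sqrt 2\, x, \|x\|^2)$ and $\tilde q = (\|q\|^2, -\sqrt 2\, q, 1)$, for which $\ip{\tilde x, \tilde q} = \|x-q\|^2 =: z$. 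The hypothesis $\|q-x\|\le 1$ for all relevant $q,x$ means the data points and the queries jointly have diameter at most $2$, so after a translation (which does not change $K$) they all lie in a ball of radius $O(1)$; hence the lifted vectors satisfy $\|\tilde x_i\|, \|\tilde q\| = O(1)$, and the effective expansion variable $z = \ip{\tilde x,\tilde q}$ satisfies $|z|\le 1$. This puts us exactly in the regime of Lemma~\ref{lem:komlos anl} with a constant radius.

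Next I would take the Taylor expansion of $g$ at $0$: $g(z) = \sum_{k\ge 0}\alpha_k z^k$ with $\alpha_k = (-\gamma)^k/k!$, so $|\alpha_k| = \gamma^k/k!$. Plugging these coefficients into Lemma~\ref{lem:komlos anl} applied in the lifted space $\R^{d+2}$ (the dimension factor stays $O(\sqrt d)$) yields, for a sign pattern $\sigma$ supplied by Lemma~\ref{uc},
\[
\Big|\sum_i \sigma_i K(q, x_i)\Big| = O\!\left(\sqrt d \sum_{k\ge 1} \frac{\gamma^k}{k!}\sqrt{k\log^3 k}\right)
\]
uniformly over all admissible $q$. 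The last ingredient is the elementary estimate $\sqrt{k\log^3 k} = O(k)$ (since $\log^3 k / k$ is bounded), which turns the series into $O\big(\sum_{k\ge 1} k\,\gamma^k / k!\big) = O\big(\gamma\sum_{k\ge 1}\gamma^{k-1}/(k-1)!\big) = O(\gamma e^\gamma)$. Since the left-hand side above is independent of $m$ (the quantity $\|\sum_i \sigma_i \tilde x_i^{\otimes k}\|_{T_k}$ is), dividing by $m$ gives $D_m = O(\gamma e^\gamma \sqrt d / m)$, which is the claim.

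I expect the only genuine subtlety to be the norm bookkeeping in the distance-to-inner-product lift: one must ensure that the radius entering Lemma~\ref{lem:komlos anl} as the $R^k$ factor is an absolute constant (morally $1$, exploiting $|z|\le 1$) and does not scale with the data, since an effective radius $\rho>1$ would replace $e^\gamma$ by $e^{\rho\gamma}$ in the final bound. Everything else — identifying the coefficients of $e^{-\gamma z}$ and summing the resulting series — is routine. In effect this is just Theorem~\ref{analitic1}/Theorem~\ref{thm:analytic2} carried out with $\gamma$ tracked throughout: the general statements only assert the existence of a constant radius $R$ and a constant prefactor, whereas here we record the precise $\gamma e^\gamma$ dependence.
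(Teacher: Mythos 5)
Your proposal matches the paper's proof essentially step for step: identify the Taylor coefficients $|\alpha_k|=\gamma^k/k!$ of $e^{-\gamma z}$, route through the distance-to-inner-product lift of Theorem~\ref{thm:analytic2} into Lemma~\ref{lem:komlos anl}, and evaluate $\sum_k \gamma^k\sqrt{k\log^3 k}/k! = O\bigl(\sum_k \gamma^k/(k-1)!\bigr) = O(\gamma e^\gamma)$. You also explicitly flag the one place where the paper is terse, namely that the effective radius entering the $R^k$ factor after the lift must be $O(1)$ under $\|q-x\|\le 1$, which the paper handles implicitly with the same assumption.
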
 
This improves upon the recent result of \cite{DBLP:journals/corr/abs-1802-01751} by proving the existence of $\eps$ approximation corsets of size $\sqrt{d}/\eps$ for Gaussian kernel density, in the case where $\gamma$ is constant. 
This also resolves the open problem raised by \cite{DBLP:journals/corr/abs-1802-01751} and matches their lower bound.   
For non-constant $\gamma$ assume w.l.o.g. $\|q-x\| \le 1$. The Taylor series of the Gaussian kernel $K$ exhibits $|\alpha_k| \le \gamma^k/k!$.
Plugging into the equation in the proof of Theorem~\ref{thm:analytic2} we get that the sum determining the constant is upper bounded by
$$\sum_{k=1}^\infty \gamma^{k}(k\log^3(k))^{1/2}/k! = O\left(\sum_{k=1}^\infty \gamma^{k}/(k-1)!\right) = O\left(\gamma \exp(\gamma)\right)$$

\subsection{Towards an Efficient Algorithm}\label{kde2}
%\section{A Simple Algorithm for Kernel Density Estimation}

From section \ref{sec:analytic} we know that the class discrepancy of the Gaussian kernel is $D_m = O(\sqrt{d}/m)$. 
Here, we provide a computationally efficient bound that can be achieved with a straightforward algorithm of complexity $O(m^2)$. Together with the results of Section~\ref{sec:sketch} this provides an efficient sketching algorithm for Kernel Density Estimation. 
In fact, we show that for any positive kernel $D_m = O(1/\sqrt{m})$. This bound is superior to that of the previous section for high dimensions $d > m$. 
More importantly, there is a very simple, intuitive, and deterministic algorithm for computing the signs $\sigma$. 
Given a collection of data points $X = \{x_1,\ldots, x_n\}$ in $\R^d$ the density function $f: \R^d \rightarrow \R$ of a point $q$ is defined as $ F(q) = \sum_{i=1}^{n} K(x_i,q) $.
Here, $K$ is any \emph{positive semi-definite kernel} function. The most frequent examples include
$$ K(x,q) = \exp(- \|x-q\|_2^2/\lambda^2),\;\; K(x,q) = \exp(- \|x-q\|/\lambda) ,\;\;  K(x,q) = (1+\|x-q\|/_2^2/\lambda^2)^{-1}$$
where $\lambda$ is a scaling parameter. For simplicity, we assume that $K(x,x) \leq 1$ for all data points. Notice that for any kernel based on the distance we have $K(x,x)=1$ exactly for all $x \in \R^d$.

\begin{algorithm}
\begin{algorithmic}
\STATE {\bf input:} Kernel function $K:(\R^d,\R^d)\rightarrow[0,1]$, points  $\{x_1,\ldots,x_m\}$
\STATE {\bf output:} $\sigma \in \{-1,1\}^m$ such that $\max_q |\sum_i \sigma_i K(x_i,q) | \le \sqrt{m}$
\STATE$\sigma_1 = 1$
\FOR {$i = 2,\ldots,m$} 
        \STATE $\sigma_i = -\operatorname{sign} (\sum_{j=1}^{i-1}\sigma_j  K(x_j, x_i))$
\ENDFOR 
\end{algorithmic}
\caption{Low discrepancy algorithm for positive semi-definite kernels}\label{kernelAlg}
\end{algorithm}

\begin{theorem} \label{thm:disc simple kernel}
Algorithm \ref{kernelAlg} achieves $\max_q |\sum_i \sigma_i K(x_i,q) | \le \sqrt{m}$.
\end{theorem}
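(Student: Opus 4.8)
The plan is to exploit the fact that $K$ is positive semi-definite to linearize the problem entirely. First I would fix an arbitrary query $q$ and form the $(m{+}1)\times(m{+}1)$ Gram matrix of the points $x_1,\dots,x_m,q$; positive semi-definiteness of the kernel guarantees this matrix is PSD, hence factors as a matrix of inner products. This produces vectors $v_1,\dots,v_m,u$ (in $\R^{m+1}$, say) with $\ip{v_i,v_j}=K(x_i,x_j)$, $\ip{v_i,u}=K(x_i,q)$, and in particular $\|v_i\|^2=K(x_i,x_i)\le 1$ and $\|u\|^2=K(q,q)\le 1$. The purpose of this step is that $\sum_i\sigma_i K(x_i,q)=\ip{\sum_i\sigma_i v_i,\,u}$, so by Cauchy--Schwarz it will suffice to bound $\bigl\|\sum_i\sigma_i v_i\bigr\|$.

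Next I would track the partial sums $S_i=\sum_{j\le i}\sigma_j v_j$ and observe that the quantity the algorithm computes at step $i$, namely $\sum_{j<i}\sigma_j K(x_j,x_i)$, is exactly $\ip{S_{i-1},v_i}$; hence the algorithm sets $\sigma_i=-\operatorname{sign}\ip{S_{i-1},v_i}$ (with either sign acceptable if this inner product vanishes). The key computation is the expansion $\|S_i\|^2=\|S_{i-1}\|^2+2\sigma_i\ip{S_{i-1},v_i}+\|v_i\|^2$, in which the sign choice forces $\sigma_i\ip{S_{i-1},v_i}=-|\ip{S_{i-1},v_i}|\le 0$, so that $\|S_i\|^2\le\|S_{i-1}\|^2+\|v_i\|^2\le\|S_{i-1}\|^2+1$. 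A trivial induction starting from $\|S_1\|^2=\|v_1\|^2\le 1$ then yields $\|S_m\|^2\le m$.

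Finally I would assemble the pieces: $\bigl|\sum_i\sigma_i K(x_i,q)\bigr|=|\ip{S_m,u}|\le\|S_m\|\,\|u\|\le\sqrt{m}\cdot 1=\sqrt{m}$. Since $q$ was arbitrary and the sign sequence the algorithm outputs does not depend on $q$ at all, this gives $\max_q\bigl|\sum_i\sigma_i K(x_i,q)\bigr|\le\sqrt{m}$, as claimed. I do not expect a genuine obstacle here; the only point requiring care is that the feature vectors must exist \emph{simultaneously} for the data $x_1,\dots,x_m$ and for the query $q$, which is precisely why I fold $q$ into the Gram matrix rather than invoking a feature map for the data alone (equivalently one could work directly in the RKHS of $K$, but the finite Gram-matrix factorization keeps the argument elementary). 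It is also worth remarking that the dimension $d$ never enters, which is exactly why the resulting discrepancy bound is dimension-free.
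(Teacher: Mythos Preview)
Your proof is correct and follows essentially the same route as the paper: linearize via a feature representation so that $\sum_i\sigma_i K(x_i,q)=\ip{S_m,u}$, apply Cauchy--Schwarz, and then show $\|S_i\|^2\le\|S_{i-1}\|^2+1$ by expanding the square and using the greedy sign choice. The only cosmetic difference is that the paper invokes the abstract feature map $\phi$ with $K(x,q)=\ip{\phi(x),\phi(q)}$, whereas you obtain the feature vectors by factoring the finite Gram matrix of $x_1,\dots,x_m,q$; as you note, these are equivalent formulations of the same argument.
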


\begin{proof}
For any positive semi-definite kernel $K$ there exist a mapping $\phi: \R^d \to {\cal V}$ to an inner product space $\cal V$ such that 
$ K(x,q) = \ip{\phi(x), \phi(q)} $.
Using this function $\phi$ our objective function becomes
\[
|\sum_{i=1}^m \sigma_i K(x_i,q)| = |\sum_{i=1}^m \sigma_i \ip{\phi(x_i), \phi(q)} | = \left| \ip{ \sum_{i=1}^m \sigma_i \phi(x_i), \phi(q)}\right| \leq  \|\phi(q)\| \cdot \left\|  \sum_{i=1}^m \sigma_i \phi(x_i) \right\| 
\]
Since $\|\phi(q)\| \leq 1$ we reduced the problem to bounding the norm of $ \sum_{i=1}^m \sigma_i \phi(x_i) $.
We show by induction on $i$ that 
$\left\| \sum_{j=1}^i \sigma_j \phi(x_j) \right\|^2 \le \sum_{j=1}^i \left\|\phi(x_j)\right\|^2 \leq i$.
This is trivially true for $i=1$ since $\|\phi(x)\| \leq 1$. 
Using our induction assumption we get
\begin{eqnarray*}
\left\| \sum_{j=1}^{i}\sigma_j \phi(x_j)\right\|^2 &=& \left\|\sum_{j=1}^{i-1}\sigma_j \phi(x_j)\right\|^2 + \|\phi(x_i)\|^2 + 2\ip{ \sum_{j=1}^{i-1}\sigma_j \phi(x_j), \sigma_i \phi(x_i)} \\
&\le& \sum_{j=1}^{i-1} \|\phi(x_j)\|^2 + \|\phi(x_i)\|^2 + 2\sigma_i \sum_{j=1}^{i-1}\sigma_j K(x_j, x_i)\\
&=& \sum_{j=1}^{i} \|\phi(x_j)\|^2 - 2\left| \sum_{j=1}^{i-1}\sigma_j K(x_j, x_i) \right| \le \sum_{j=1}^{i} \|\phi(x_j)\|^2 \\
\end{eqnarray*}
The first equality simply unpacks the squared vector norm, the second transition is due to the induction assumption and the last substitutes our choice of 
$\sigma$ (and $\operatorname{sign}(z)\cdot z =  |z|$). This completes the proof that $|\sum_{i=1}^m \sigma_i K(x_i,q)| \le \sqrt{m}$ for all $q$.
\end{proof}

Using the framework above provides a deterministic coreset construction for kernel density estimation of size $O(1/\eps^2)$ such that $\forall \;q\;\; |\tilde F(q) - F(q)| \le \eps n$. This matches and simplifies the results achieved by \cite{DBLP:conf/soda/PhillipsT18} and \cite{DBLP:journals/corr/abs-1802-01751}. Theorem~\ref{thm:streaming21} leads to a deterministic streaming algorithm with a memory complexity of $O(\log^3(\eps^2 n)/\eps^2)$. For $L$-Lipchitz kernels, meaning $K$ such that $|K(x,q+h) - K(x,q)|/\|h\| \leq L$ for all $h \neq 0$, Theorem~\ref{thm:streaming22} leads to a randomized streaming algorithm with a memory complexity of $O\left(\log^3 \left( d \log\left(RLn/\delta\eps\right) \right) / \eps^2 \right)$ that succeeds in finding a coreset with probability $1-\delta$. The parameter $R$ is the maximum norm of a query. The argument goes through a union bound over an $\eps/L$-net over vectors of norm at most $R$, the size of which is $(RL/\eps)^{O(d)}$.

\paragraph{Note} Theorem~\ref{thm:disc simple kernel} provides an upper bound of $\sqrt{m}$ for the sign discrepancy. 
This upper bound is tight since there exist sets of vectors in high dimensions that requires it. 
For data that lends itself to density estimation, however, one should expect input vectors to be clustered together.
In such cases, the algorithm above performs much better than the worst-case bound predicts. 
We leave it to future work to define properties of the data that ensure better guarantees for Algorithm~\ref{kernelAlg}.
\bibliography{density}

%%%%%%%%%%%%%%%%%%%%%%%%%%%%%%%%%%%%%%%%%%%
\appendix
%%%%%%%%%%%%%%%%%%%%%%%%%%%%%%%%%%%%%%%%%%%

\section{Proofs for Section~\ref{sec:sketch}, Sketching Coresets} \label{app:sketch proof}

The proofs of Theorems \ref{thm:streaming11}, \ref{thm:streaming12}, \ref{thm:streaming21}, and \ref{thm:streaming22} all use the basic concept of a compactor. A compactor consumes a stream of items and outputs another stream. 
The output stream contains at most half the items from the input stream with double the weight. 
It does so by keeping a buffer of a certain capacity $m$. When a new item is inserted into the compactor it is added to its buffer. 
If the buffer is full, a compaction operation takes place. 
The compaction takes the elements in the buffer $x_1,\ldots,x_m$ and finds a low discrepancy assignment $\sigma$ such that 
$\max_q |\sum_i \sigma_i f(x_i,q)| \leq m D_m$. 
Note that such a sequence is guaranteed to exist by the definition of the class discrepancy. For cases where an algorithm for finding this sequence $\sigma$ is not known, our result applies for the guarantee of the $\sigma$ sequence obtained by the algorithm. 
That is, if it is possible to obtain a bound of $D_m$ yet we can only find signs obtaining a bound of $\tilde{D}_m > D_m$, our results for the obtainable signs apply for $\tilde{D}_m$.
Given the sign vector $\sigma$, the compactor appends either $\{ x_i | \sigma_i = 1\}$ or  $\{ x_i | \sigma_i = -1\}$ to the output stream. 

Consider a stream of data points $x_1,\ldots,x_n$ and the output stream of a compactor $z_1,\ldots,z_{\tilde{n}}$. The error associated with the new stream w.r.t.\ a query $q$ is defined as
$$ \sum_{i=1}^n f(x_i,q) - 2\sum_j f(z_j,q) \ .$$
This is the difference between the value of $q$ on the original stream and the output stream. For a compactor we would like to bound both the length of the output stream, and the absolute value of its error.

\begin{lemma} \label{lem:det compactor}
A deterministic compactor output the smaller of the two sets  $\{ x_i | \sigma_i = 1\}$ or  $\{ x_i | \sigma_i = -1\}$. Given an input of length $n$, the output has at most $n/2$ items, and the error of the output stream is bounded in absolute value by $nD_m$
\end{lemma}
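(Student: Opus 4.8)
The plan is to reduce the analysis of the whole output stream to a single per-compaction bound and then sum. Let $m$ be the buffer capacity and let $B_1,\dots,B_k$ be the $k=\lfloor n/m\rfloor$ disjoint blocks of $m$ consecutive stream items that each trigger a compaction (for cleanliness I would assume $m \mid n$; a shorter leftover block is treated as one more, possibly partial, compaction and contributes at most an additional $m D_m$ term and $m/2$ output items, which I will flag but not belabor). For block $B_t$ the compactor chooses signs $\sigma^{(t)}$ with $\max_q \big|\sum_{i\in B_t}\sigma^{(t)}_i f(x_i,q)\big| \le m D_m$ — such signs exist by Definition~\ref{cd2}, where the $1/m$ normalization turns $D_m$ into the stated $m D_m$ — and appends to the output whichever of $S_t^{+}=\{i\in B_t:\sigma^{(t)}_i=1\}$ or $S_t^{-}=\{i\in B_t:\sigma^{(t)}_i=-1\}$ is smaller.

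First I would prove the single-compaction identity. For every query $q$,
\[
2\sum_{i\in S_t^{+}} f(x_i,q) \;=\; \sum_{i\in B_t}(1+\sigma^{(t)}_i)f(x_i,q) \;=\; \sum_{i\in B_t} f(x_i,q) + \sum_{i\in B_t}\sigma^{(t)}_i f(x_i,q),
\]
and symmetrically $2\sum_{i\in S_t^{-}} f(x_i,q) = \sum_{i\in B_t} f(x_i,q) - \sum_{i\in B_t}\sigma^{(t)}_i f(x_i,q)$. Hence the local error of block $t$, namely $\sum_{i\in B_t} f(x_i,q) - 2\sum_{i\in S_t} f(x_i,q)$ where $S_t$ is the side actually kept, equals $\mp\sum_{i\in B_t}\sigma^{(t)}_i f(x_i,q)$ and is therefore bounded in absolute value by $m D_m$ for every $q$, regardless of which side was kept. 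Since $S_t^{+}$ and $S_t^{-}$ partition $B_t$, the one that is kept has at most $m/2$ elements, giving the size bound $\tilde n \le k\cdot(m/2) \le n/2$.

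Next I would glue the compactions together. The output stream is the disjoint union $\bigcup_t S_t$, and the blocks $B_t$ partition the input, so the global error telescopes into a sum of local errors:
\[
\sum_{i=1}^{n} f(x_i,q) - 2\!\!\sum_{z\in\bigcup_t S_t}\!\! f(z,q) \;=\; \sum_{t=1}^{k}\Big(\sum_{i\in B_t} f(x_i,q) - 2\sum_{i\in S_t} f(x_i,q)\Big) \;=\; \sum_{t=1}^{k}\Big(\mp\sum_{i\in B_t}\sigma^{(t)}_i f(x_i,q)\Big).
\]
The triangle inequality together with the per-block bound then yields $\big|\mathrm{error}(q)\big| \le k\cdot m D_m \le n D_m$, uniformly in $q$, which is the claim.

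The genuinely delicate point is not any single inequality but the bookkeeping that makes the decomposition above legitimate: that the items the compactor emits form a partition-respecting refinement of its input (so the telescoping has no overlaps or omissions), that "error" is additive across compactions in the strong uniform-in-$q$ sense used above, and that the final partial buffer is absorbed without inflating either conclusion beyond the stated order. Once that is pinned down, both statements follow from the elementary identity of the second paragraph and a one-line triangle-inequality sum.
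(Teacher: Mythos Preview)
Your proof is correct and follows essentially the same approach as the paper: establish the per-compaction identity $2\sum_{i\in S_t^{\pm}} f(x_i,q)=\sum_{i\in B_t} f(x_i,q)\pm\sum_{i\in B_t}\sigma_i^{(t)} f(x_i,q)$ so that each compaction's error is at most $mD_m$, then sum over the $n/m$ compactions via the triangle inequality. Your write-up is more explicit about the block decomposition and the leftover partial buffer, but the argument is the same.
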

\begin{proof}
We note that the argument about the length is obvious, so we proceed to bound the error. Consider a single compaction operation done on $m$ vectors $x_1,\ldots,x_m$. For a query $q$, let $F(q)=\sum_{i=1}^m f(x_i,q)$ be the evaluation on the items of the buffer. Let $\tilde F_{+}$ denote the function evaluated on $\{ x_i | \sigma_i = 1\}_{i=1}^m$ (similarly $\tilde F_{-}$ defined for negative signs). Also, let $E(q) = \sum_{i=1}^m \sigma_i f(x_i,q)$ for the signs $\sigma$ computed by the algorithm above. We have that 
$$\tilde F_{+}(q) = \sum_{i ,\; \sigma_i=1} 2f(x_i, q) = \sum_{i} f(x_i, q) +  \sum_{i} \sigma_i f(x_i, q) = F(q) + E(q)$$
$$\tilde F_{-}(q) = \sum_{i ,\; \sigma_i=-1} 2f(x_i, q) = \sum_{i} f(x_i, q) - \sum_{i} \sigma_i f(x_i, q) = F(q) - E(q)$$
meaning that the error for the items of the single compaction is bounded by
$$|\tilde F_{\pm}(q) - F(q)| = |E(q)| \le \max_q |\sum_i \sigma_i f(x_i,q)| = mD_m$$
Summing over all $n/m$ compactions we get that the overall error is bounded, in absolute value, by $nD_m$.
\end{proof}

Lemma~\ref{lem:det compactor} alone already allow us to prove Theorems~\ref{thm:streaming11} and~\ref{thm:streaming21}. 
The algorithms are a direct extension the well know MRL algorithm \cite{MRL} for quantile sketching. 
Note that for quantiles, $f(x,q) = 1$ if $q > x$ and $0$ else. 
A low discrepancy sequence is achieved simply by sorting the values and assigning $\sigma_i = 1$ for all evenly positioned values in the sorted order and $\sigma_i=-1$ to the odd positions. The above gives class discrepancy of $1/m$ for quantile approximation.
Theorems~\ref{thm:streaming11} and~\ref{thm:streaming21} below generalize this algorithm to any low discrepancy class.

\paragraph{Theorem~\ref{thm:streaming11}}
For any function family $\F$ with a corresponding class discrepancy $D_m = O(c/m)$ there exists an fully-mergeable streaming coreset deterministic algorithm of size $O\left(c\log^2(\eps n/c)/\eps\right)$ whose error is at most $\eps n$.
\begin{proof}
Consider feeding the output of the first compactor into a second one etc. Specifically, we start with a single compactor and open a second once it produced any output, then open a third compactor once the second produced output, etc.
Number the compactors $0,\ldots,H$. The weight of items given to compactors $h$ have weight $w_h = 2^h$. The length of the input stream seen by compactor is $n_h \le n/2^h$.

Each compactor contributes at most $w_h n_h D_m \le n D_m$ error. Moreover since the $H-1$ layer had outputs, we must have $m \leq n_{H-1}$ and
$$  \log_2(m) \leq \log_2(n_{H-1}) \leq \log_2(n) - (H-1)$$
leading to a bound $H \le \floor{log_2(n/m)}+1$.
The total error is therefore $H n D_m \le O(log(n/m) n D_m)$. 
Setting $m \ge m_0 = O(c\log(\eps n/c)/\eps)$ and replacing $D_m = c/m$ we get that the error is at most $O(log(n/m) n D_m) \le \eps n$.
Since we have $H = O(log(\eps n/c))$ compactors the overall space complexity is $O(c\log^2(\eps n/c)/\eps)$. 
\end{proof}

\paragraph{Theorem~\ref{thm:streaming21}}
For any function family $\F$ with a corresponding class discrepancy $D_m = O(c/\sqrt{m})$ there exists a fully-mergeable streaming coreset deterministic algorithm of size $O\left(c^2\log^3(\eps^2 n/c) /\eps^2\right)$ whose error is at most $\eps n$. 
\begin{proof}
The proof is identical to the one above except for the variable setting of
Setting $m \ge m_0 = O(c^2\log^2(\eps^2 n/c^2)/\eps^2)$ and replacing $D_m = c/\sqrt{m}$. 
We get that the error is at most $O(log(n/m) n D_m) \le \eps n$. 
Since we have $H = O(log(\eps^2 n/c^2))$ such compactors the overall space complexity is $O\left(c^2\log^3(\eps^2 n/c^2) /\eps^2\right)$. 
\end{proof}

We proceed to prove Theorem~\ref{thm:streaming22}. To understand the motivation consider first an easier setting where the overall stream length $n$ is known to us in advance. Since $|f(x,q)| \leq 1$, standard concentration bounds will show that by sampling each item w.p.\ $\log(1/\delta)/n\eps^2$ we get an output stream of length $\log(1/\delta)/\eps^2$, that for any fixed query $q$, with probability at least $1-\delta$ suffers an error of $\eps n$ for that query. We can feed this output stream into a deterministic sketch, and given that the input length for the deterministic sketch is $\log(1/\delta)/\eps^2$, Theorem~\ref{thm:streaming21} leads to the required guarantee.

Because we do not know the stream length in advance, we operate as in the deterministic case with compactors. The difference will be that each compactor will keep a count of how many items it has seen. Once a compactor observed more than $\tilde{n} = O(\log(1/\delta)/\eps^2)$ items, it will no longer use a buffer of size $m$ but rather a buffer of size 2. For every two items observed it will output one of them uniformly at random. It is easy to see that a sequence of such compactors can, in fact, be implemented with $O(1)$ memory via reservoir sampling. 
The memory of this process is therefore identical, at least asymptotically, to the above.

\paragraph{Theorem~\ref{thm:streaming22}}
For any function family $\F$ with a corresponding class discrepancy $D_m = O(c/\sqrt{m})$ there exists a fully-mergeable streaming coreset randomized algorithm of size $O\left(c^2\log^3\log(n/\delta) /\eps^2\right)$ whose error for any fixed function $f \in \F$ is at most $\eps n$ with probability at least $1-\delta$. 
\begin{proof}
As in the deterministic setting we maintain a sequence of compactors of levels $h=0,\ldots,H$. Notice that the value of $H$ is increasing as the stream grows longer. Recall that a compactor of level $h$ observes elements of weight $2^h$ and outputs elements of weight $2^{h+1}$. As before we use a buffer of $m$ and get that $H \leq  \floor{log_2(n/m)}+1$.
The difference is that for a compactor of level $h$, once $h \leq H' = H - \log(\tilde{n}/m)$, where $\tilde{n} = O(\log(1/\delta)/\eps^2)$ with a constant in the $O()$ term that will be determined later, we change the mode of operation for this compactor. Notice that the requirement for $h$ ensures that the number of items observed by the $h$'th compactor is at least $n_h \geq \tilde{n}$. Rather than using a buffer of size $m$ the compactor uses a buffer of size 2 and for every two observed items it outputs one of them uniformly at random.

To analyze the memory requirement, notice that the compactors of levels $h=0,\ldots,H'$ are in fact performing reservoir sampling for every $2^{H'+1}$ items, meaning that they can be implemented in $O(1)$ memory. This means that the overall memory requirement is $O(m\log(\log(1/\delta)/m\eps^2))$; for $m \geq 1/\eps^2$ this is $O(m\log\log(1/\delta))$. 

We continue to bound the error. For the top compactors of level $h=H'+1,\ldots,H$ we get as in the deterministic case that the error for each is $n D_m$. Since we will use $m \geq 1/\eps^2$ we get that the error for all top compactors is $O(nD_m\log\log(1/\delta))$.
Consider now a compactor of level $h \leq H'$. For the first $\tilde{n}$ items it observed, the error is bounded by $2^h\tilde{n}D_m \leq 2^{h-H'}nD_m$. Fix a query $q$; for the items following the first $\tilde{n}$ items the compactor is operating in the sampling mode. For every pair, the associated error w.r.t $q$ is a random variable, of mean zero and absolute value of at most $w_h=2^{h+1}$. There are $(n_h-\tilde{n})/2 \leq n_h$ such pairs and the overall error w.r.t.\ $q$ is the sum of these independent random variables. Chernoff bound implies that with probability $1-\delta$, the overall error is bounded by $E_h = O(w_h\sqrt{n_h\log(1/\delta)})$. Since $n_h \geq \tilde{n}2^{H'-h} = O(2^{H'-h}\log(1/\delta)/\eps^2)$ we get that
$$ E_h = O\left(2^h n_h\frac{\eps}{2^{(H'-h)/2}}\right) = O(\eps n 2^{(h-H')/2}) $$
We get that the sum of errors associated with the compactors of level $h=0,\ldots,H'$ form a geometric sequence dominated by the error of the $H'$ compactor, which is in turn $O(\eps n)$. For proper constants in $\tilde{n}$ we get a bound of $\eps n/2$ for the bottom compactors. For a budget of $m = \Omega(c^2\log^2(\log(1/\delta))/\eps^2)$ for the buffers of the top compactors we guarantee an overall error of $\eps n/2$ for the top compactors. 

To conclude, we get an error of $\eps n$ w.p. $1-\delta$ for any fixed $q$ with a memory budget of
$$O(m\log\log(1/\delta)) = O\left(c^2\log^3(\log(1/\delta))/\eps^2\right)$$
as required.
\end{proof}

We are now ready for the proof of Theorem~\ref{thm:streaming12}. Here we extend the idea of \cite{DBLP:conf/focs/KarninLL16} applied for quantiles to general coresets. To explain the high-level idea consider again the easier setting where we know $n$, the length of the stream in advance. As in the $D_m=c/\sqrt{m}$ case, we will split the compactors into the top $\log\log(1/\delta)$ ones acting deterministically and bottom compactors yielding random outputs. The issue comes from the choice of $m$. To handle the error of the top compactors it suffices to set $m=c/\eps \ll 1/\eps^2$. The fact that $m \ll 1/\eps^2$ means that the top random compactors observe a stream that is shorter than before and having a buffer of size 2 will result in a large error. We can mitigate this by adding $\log(1/\eps)$ more deterministic compactors and replace the $\log\log(1/\delta)^2$ term in the memory requirement with $\left(\log\log(1/\delta)/\eps\right)^2$. If $\log(1/\delta) \gg 1/\eps$ then this is a good solution. However, for cases where $\eps$ is small we can avoid the $\log(1/\eps)$ term altogether. To do that, the random compactors will not have a buffer of size 2, but a buffer size of $m_h$ depending on their level. Specifically the sequence of $m_h$ starting from the top random level $h= H-\log\log(1/\delta)$ and ending with $h=0$ is exponentially decreasing until hitting the minimal buffer size of $2$.

The memory requirement is now $O(m)$ and a careful analysis of the error will lead to an $\eps n$ term coming from the bottom layers. One subtle issue we will need to take into account is that for random compactors with budget $m_h>2$ the output stream is only half as long as the input stream in expectation. Luckily, the output stream length is sharply concentrated around its mean so a union bound can ensure that w.p. $1-\delta$ the output stream is not much longer than its expectation.

\paragraph{Theorem~\ref{thm:streaming12}} 
Any function $f$ with class discrepancy $D_m(f) = O(c/m)$ has streaming coreset complexity of $O\left(c\log^2\log(|Q_\eps|/\delta)/\eps\right)$.
$Q_\eps$ is an epsilon net for $f$ on $\Q$. 
The streaming coreset algorithm is randomized and fails with probability at most $\delta$.
\begin{proof}
We start by describing the algorithm, from the perspective of a compactor of level $h$. The compactor observes an input stream of items with weight $2^h$ and outputs a stream of weight $2^{h+1}$. When created the compactor has a budget of $m_h=m$. Once it outputs items to an output stream for the first time, a new compactor of level $h+1$ is created. We keep track of $H$, the level of the top compactor, that did not yet output any items. When $H$ is updated, compactors of level $h<H$ might restrict their budget. Specifically, for some $H'=H-O(\log\log(n/\delta))$ where we set the constant of the $O()$ term later, a compactor of level $h \leq H'$ sets its buffer size to
$$ m_h = \max\left\{2, \ceil{(2/3)^{h-H'}m} \right\} $$
compactors of level $h >H'$ have a buffer size of $m$. We note that although $n$ is present in the definition of $H'$ we can use a crude upper bound. Given that the dependence is doubly logarithmic the upper bound can be extremely crude. Furthermore, $\delta$ is typically set to be exponentially small, so we ignore this issue.

Compactors of level $h >H'$ act in a deterministic manner. Namely, once the buffer is full with items $x_1,\ldots,x_m$ we find the sign assignment $\sigma$ giving $\left|\max_q \sum \sigma_i f(x_i, q)\right| < m_hD_{m_h}=mD_m$ and output the smallest of the sets $X_+=\{x_i | \sigma_i > 0 \}$, $X_- = \{x_i | \sigma_i < 0 \}$. Compactors of level $h \leq H'$ act in a random manner; they output either the items of $X_-$ or $X_+$ with equal probability. When the stream is finished the coreset consists of all the items in the buffers, along with their corresponding weight.

Let's begin by analyzing the memory complexity of the algorithm. The top layers each require a buffer of size $m$, and there are $\log\log(n/\delta)$ such buffers. It follows that they require $O(\log\log(n/\delta)m)$ memory. The bottom layers are exponentially decreasing until hitting $m_h=2$. All layers with $m_h=2$ are stacked in a consecutive way so they are in fact doing reservoir sampling and can be implemented with $O(1)$ memory. The layers with $m_h>2$ are have exponentially growing weights ending at $m$, so the overall memory they require is $O(m)$. Concluding, the overall memory requirement is $O(\log\log(n/\delta)m)$.

We are now ready to bound the error, starting with the bottom layers. Fix a query $q$. For a layer $h$ we will provide a high probability bound to both $E_h(q)$, the error associated to its output stream and the length of the output stream. Let $n_h$ be the overall number of items layer $h$ observes. Let $m_h$ be the buffer size of level $h$ at the end of the stream. Since having a larger buffer size only improves the error bound, we analyze the error as if the budget was set as $m_h$ to begin with.

With the assumption of all compactions being done with a buffer of size $m_h$, the number of compactions is $n_h/m_h$ and the error associated with each compaction is a zero mean random variable, with an absolute value of $2^h m_h D_{m_h}$. The overall error $E_h(q)$ is the sum of these independent random variables. It follows from Chernoff-Hoeffding bound that for any $\eps_h > 0$,
\begin{equation} \label{eq:err low1}
\Pr\left[ E_h(q) > 2^h m_h D_{m_h} \eps_h n_h \right] = \exp \left( -\Omega\left( \eps_h^2 n_h m_h \right) \right)
\end{equation}

For a bound on the output length we will analyze the behavior of the compactor with the assumption that all compactions are done to $m$ elements. This is not the case but an upper bound for this scenario also bounds the scenario where $m_h$ is decreasing with time. Every compaction outputs a random number of items between $0$ and $m$, with an expected value of $m/2$. Again, using Chernoff-Hoeffding we get
\begin{equation} \label{eq:outlen}
\Pr\left[ n_{h+1} > n_h(1/2+1/\log(n)) \right] = \exp \left( -\Omega\left( \frac{n_h}{m \log^2(n)} \right) \right) 
\end{equation}
To bound this expression we derive a lower bounding on $n_h$. Notice that the compactors of levels $H'+1,\ldots,H$ are acting in a deterministic manner meaning that 
$$n_h \geq n_{H'} \geq 2^{H-H'-1}n_{H-1} \geq 2^{H-H'-1}m = \Omega(\log^2(n)\log(\log(n)/\delta)m)$$
where the constant in the $\Omega$ term can be controlled via constant defining $H'$. Plugging into Equation~\eqref{eq:outlen} leads to
\begin{equation*} 
\Pr\left[ n_{h+1} > n_h(1/2+1/\log(n)) \right] \leq \delta/2(\log_2(n)+3)
\end{equation*}
A union bound over $h=0,\ldots,\log_2(n)+2$ indicates that w.p. $1-\delta/2$, $n_h \leq 3n/2^h$ for all mentioned $h$ values. In particular this means that $H \leq \log_2(n)+2$ meaning that 
\begin{equation} \label{eq:outlen2}
\Pr\left[ \forall h, n_h \leq 3n/2^h \right] \geq 1-\delta/2
\end{equation}

We can now plug the upper bound for $n_h$ to Equation~\ref{eq:err low1} and achieve 
\begin{equation} \label{eq:err low2}
\Pr\left[ E_h(q) > m_h D_{m_h} \eps_h n \right] = \exp \left( -\Omega\left( \frac{\eps_h^2 m_h n}{2^h} \right) \right)
\end{equation}
Recall that $2^{H-H'-1} = \Omega(\log^2(n)\log(\log(n)/\delta))$ and $n \geq 2^{H-1}m$. Combining the two leads to $n = \Omega(2^{H'}\log(n/\delta)m)$. Now, since $m_h \geq (2/3)^{H'-h}m$ we have that
$$
\frac{\eps_h^2 n m_h}{2^h} = \Omega\left( (2/3)^{H'-h} \frac{\eps_h^2 2^{H'} \log(n/\delta) m^2}{2^h} \right) = \Omega\left( (4/3)^{H'-h} \eps_h^2 \log(n/\delta) m^2 \right)
$$
Plugging this into Equation~\eqref{eq:err low2}, with $\eps_h = (3/4)^{h-H'}/m$ and using $m_h D_{m_h} \leq c$ we get
\begin{equation} \label{eq:err low3}
\Pr\left[ E_h(q) > \frac{(3/4)^{h-H'} c}{m} n \right]  \leq \delta/2n
\end{equation}
Since $H' < n$ we get that w.p.\ $1-\delta/2$
$$ \sum_{h=1}^H E_h(q) \leq (4c/m) n $$

Concluding the analysis for the bottom $H'$ layers, w.p.\ at least $1-\delta$ their error is $(4c/m) n$ and the output stream of the $H'$ compactor outputs at most $3n/2^{H'+1}$ items, each having a weight of $2^{H'+1}$. With the length of the output stream we use the fact that the top layers are deterministic and can apply Lemma~\ref{lem:det compactor} to bound their error of each of these layers by
$$ 3D_m n \leq (3c/m) n $$
Since there are $O(\log\log(n/\delta))$ such layers, we conclude that for $m=\Omega(\log\log(n/\delta) c/\eps)$ with appropriate constant it holds for a fixed $q$, w.p.\ at least $1-\delta$ that the overall error of the sketch is bounded by $\eps n$. The resulting memory requirement $O(\log^2\log(n/\delta) c/\eps)$, as claimed.
\end{proof}

%\section{Signed sum of matrices}
%%\begin{theorem}[Nikhil Bansal]\label{BansalInDaHouz}
%\begin{theorem*}
%For any set of matrices $X_1,...,X_n \in \R^{d_1 \times d_2}$ such that $\|X_i\|_{F} \le 1$ there exists a set of signs $\sigma_i \in \{-1,1\}$ such that $||\sum_i \sigma_i X_i||_{2} \le O(\sqrt{d_1 + d_2})$.
%\end{theorem*}
%\begin{proof}
%First, recount Banaszczyk's theorem. 
%Let $\mathcal K$ be a convex body in $\R^d$ such that $\Pr[g \in \mathcal K] \ge 1/2$ where $g$ is Gaussian i.i.d.
%Let $x_1,\ldots,x_n$ be vectors in $\R^d$ with $\|x_i\| \le 1$. 
%Then, there exist signs $s_i$ such that $\sum s_i x_i \in C \mathcal K$ for some universal constant $C$.
%
%To invoke Banaszczyk's theorem here we set $x_i \in \R^{d_1d_2}$ to be the flattening of $X_i$ into a vector by concatenating its rows. 
%Let $\mathcal K$ be the set of vectors in $\R^{d_1d_2}$ whose stacking (the inverse of flattening) results in a matrix whose spectral norm is less then $10\sqrt{d_1 + d_2}$.
%More accurately $\mathcal K = \{x  \in \R^{d_1d_2} |\;\; ||\operatorname{stack}(x)||_2 \le 10\sqrt{d_1+d_2}\}$. 
%Due to $\operatorname{stack}$ being linear $\mathcal K$ is convex. 
%Moreover $\Pr[g \in \mathcal K] \ge 1/2$ due to standard results on the operator norm of random Gaussian matrices.
%Banaszczyk's theorem says that there exist signs $s_i$ such that $\sum s_i \operatorname{flatten}(X_i) \in C \mathcal K$.
%In other words $\| \sum s_i X_i\| = \| \operatorname{stack}( \sum s_i \operatorname{flatten}(x_i x_i^T)) \| \le 10 C \sqrt{d}$. 
%\end{proof}

\end{document}